\documentclass{article}




    \usepackage[final]{neurips_2020}


\usepackage[utf8]{inputenc} 
\usepackage[T1]{fontenc}    
\usepackage{hyperref}       
\usepackage{url}            
\usepackage{booktabs}       
\usepackage{amsfonts}       
\usepackage{nicefrac}       
\usepackage{microtype}      

\usepackage{amstext} 
\usepackage{array}   
\newcolumntype{L}{>{$}l<{$}} 
\newcolumntype{C}{>{$}c<{$}} 
\usepackage{siunitx} 
\usepackage{multirow,subfig}

\usepackage{wrapfig,lipsum,booktabs}

\newcommand*{\affmark}[1][*]{\textsuperscript{#1}}
\newcommand*\samethanks[1][\value{footnote}]{\footnotemark[#1]}

\title{Implicit Graph Neural Networks}

%

\author{
  Fangda Gu\affmark[1]\thanks{Equal contributions. Work done during Heng's visit to University of California at Berkeley.} \\
  \texttt{gfd18@berkeley.edu}
  \And
  Heng Chang\affmark[2]\samethanks \\
  \texttt{changh17@mails.tsinghua.edu.cn} \\
  \AND
  Wenwu Zhu\affmark[3] \\
  \texttt{wwzhu@tsinghua.edu.cn} \\
  \And
  Somayeh Sojoudi\affmark[1,2] \\
  \texttt{sojoudi@berkeley.edu} \\
  \And
  Laurent {El Ghaoui}\affmark[1,2] \\
  \texttt{elghaoui@berkeley.edu} \\
}

\usepackage{lipsum}
\usepackage{algorithmic}
\usepackage{amsfonts}
\usepackage{amsmath}
\usepackage{amssymb}
\usepackage{amsthm}
\usepackage{amsopn}
\usepackage{tikz}
\usetikzlibrary{arrows,calc,positioning,shadows,shapes,matrix,fit}
\usepackage{verbatim}
\usepackage{todonotes}

\newcommand{\BEAS}{\begin{eqnarray*}}
\newcommand{\EEAS}{\end{eqnarray*}}
\newcommand{\BEA}{\begin{eqnarray}}
\newcommand{\EEA}{\end{eqnarray}}
\newcommand{\BEQ}{\begin{equation}}
\newcommand{\EEQ}{\end{equation}}
\newcommand{\BIT}{\begin{itemize}}
\newcommand{\EIT}{\end{itemize}}
\newcommand{\BNUM}{\begin{enumerate}}
\newcommand{\ENUM}{\end{enumerate}}

\newcommand{\BA}{\begin{array}}
\newcommand{\EA}{\end{array}}



\newcommand{\eg}{{\it e.g.}}
\newcommand{\ie}{{\it i.e.}}
\newcommand{\etc}{{\it etc.}}

\newcommand{\reals}{{\mathbb R}}



\newcommand{\diag}{\mathop{\bf diag}}


\newcommand{\vecc}{\mathop{\bf vec}}



\newcommand{\argmin}{\mathop{\rm argmin}}





\newtheorem{definition}{Definition}[section]
\newtheorem{theorem}{Theorem}[section]
\newtheorem{lemma}{Lemma}[section]
\newtheorem{remark}{Remark}[section]


\def\lpf{\lambda_{\rm pf}}


\usepackage{enumitem}


\begin{document}

\maketitle

\vspace{-2.5em}
 \begin{center}
  \affmark[1]Department of Electrical Engineering and Computer Sciences, University of California at Berkeley \\ \affmark[2]Tsinghua-Berkeley Shenzhen Institute, Tsinghua University\\ \affmark[3]Department of Computer Science and Technology, Tsinghua University
 \end{center}
 \vspace{1em}

\begin{abstract}
  Graph Neural Networks (GNNs) are widely used deep learning models that learn meaningful representations from graph-structured data. Due to the finite nature of the underlying recurrent structure, current GNN methods may struggle to capture long-range dependencies in underlying graphs. To overcome this difficulty, we propose a graph learning framework, called Implicit Graph Neural Networks (IGNN\footnote{Code available at \url{https://github.com/SwiftieH/IGNN}.}), where predictions are based on the solution of a fixed-point equilibrium equation involving implicitly defined ``state'' vectors. We use the Perron-Frobenius theory to derive sufficient conditions that ensure well-posedness of the framework. Leveraging implicit differentiation, we derive a tractable projected gradient descent method to train the framework. Experiments on a comprehensive range of tasks show that IGNNs consistently capture long-range dependencies and outperform the state-of-the-art GNN models. 
\end{abstract}
\section{Introduction}

Graph neural networks (GNNs) \citep{zhou2018graph,zhang2020deep} have been widely used on graph-structured data to obtain a meaningful representation of nodes in the graph. By iteratively aggregating information from neighboring nodes, GNN models encode graph-relational information into the representation, which then benefits a wide range of tasks, including biochemical structure discovery \citep{gilmer2017neural,wan2019neodti}, computer vision~\citep{kampffmeyer2018rethinking}, and recommender systems~\citep{ying2018graph}.
 Recently, newer convolutional GNN structures \citep{wu2019comprehensive} have drastically improved the performance of GNNs by employing various techniques, including renormalization \citep{kipf2016semi}, attention \citep{velivckovic2017graph}, and simpler activation \citep{pmlrv97wu19e}.

The aforemetioned modern convolutional GNN models capture relation information up to $T$-hops away by performing $T$ iterations of graph convolutional aggregation. Such information gathering procedure is similar to forward-feeding schemes in popular deep learning models, such as multi-layer perceptron and convolutional neural networks. However, despite their simplicity, these computation strategies cannot discover the dependency with a range longer than $T$-hops away from any given node.

One approach tackling this problem is to develop recurrent GNNs that iterate graph convolutional aggregation until convergence, without any {\em a priori} limitation on the number of hops. This idea arises in many traditional graph metrics, including eigenvector centrality~\citep{newman2018networks} and PageRank~\citep{page1999pagerank}, where the metrics are implicitly defined by some fixed-point equation. Intuitively, the long-range dependency 
can be better captured by iterating the information passing procedure for an infinite number of times until convergence. Pioneered by~\citep{gori2005new}, new recurrent GNNs leverage partial training \citep{gallicchio2010graph, gallicchio2019fast} and approximation \citep{dai2018learning} to improve performance. With shared weights, these methods avoid exploding memory issues and achieve accuracies competitive with convolutional counterparts in certain cases.

While these methods offer an alternative to the popular convolutional GNN models with added benefits for certain problems, there are still significant limitations in evaluation and training for recurrent GNN models. Conservative convergence conditions and sophisticated training procedures have limited the use of these methods in practice, and outweighed the performance benefits of capturing the long-range dependency. In addition, most of these methods cannot leverage multi-graph information or adapt to heterogeneous network settings, as prevalent in social networks as well as bio-chemical graphs \citep{wan2019neodti}. 

\paragraph{Paper contributions.}
In this work, we present the \textbf{I}mplicit \textbf{G}raph \textbf{N}eural \textbf{N}etwork (IGNN) framework to address the problem of evaluation and training for recurrent GNNs. We first analyze graph neural networks through a rigorous mathematical framework based on the Perron-Frobenius theory \citep{berman1994nonnegative}, in order to establish general well-posedness conditions for convergence. We show that most existing analyses are special cases of our result. As for training, we propose a novel projected gradient method to efficiently train the IGNN, where we leverage implicit differentiation methods to obtain the exact gradient, and use projection on a tractable convex set to guarantee well-posedness. We show that previous gradient methods for recurrent graph neural networks can be interpreted as an approximation to IGNN. Further, we extend IGNN to heterogeneous network settings. 
Finally, we conduct comprehensive comparisons with existing methods, and demonstrate that our method effectively captures long-range dependencies and outperforms the state-of-the-art GNN models on a wide range of tasks.

\paragraph{Paper outline.} In Section \ref{sec:related_work}, we give an overview of related work on GNN and implicit models. In Section \ref{sec:notation}, we introduce the background and notations for this paper. Section \ref{sec:ignn} discusses the IGNN framework together with its well-posedness and training under both ordinary and heterogeneous settings. Section \ref{sec:numerical} empirically compares IGNN with modern GNN methods.

\section{Related Work} \label{sec:related_work}
\paragraph{GNN models.}
Pioneered by \citep{gori2005new}, GNN models have gained influence for graph-related tasks. Led by GCN \citep{kipf2016semi}, convolutional GNN models ~\citep{velivckovic2017graph,hamilton2017inductive,pmlrv97wu19e,jin2019power,chang2020spectral} involve a finite number of modified aggregation steps with different weight parameters. On the other hand, recurrent GNN models \citep{gori2005new} use the same parameters for each aggregation step and potentially enable infinite steps.  
\citep{li2015gated} combines recurrent GNN with recurrent neural network structures. Methods such as Fast and Deep Graph Neural Network (FDGNN) \citep{gallicchio2010graph,gallicchio2019fast} use untrained recurrent GNN models with novel initialization to its aggregation step for graph classification. While the Stochastic Steady-State Embedding (SSE) method \citep{dai2018learning} uses an efficient approximated training and evaluation procedure for node classification. 
Recently, global method Geom-GCN~\citep{pei2020geom} employs additional embedding approaches to capture global information. However, Geom-GCN~\citep{pei2020geom} also belongs to convolutional-based GNNs, which struggle to capture very long range dependency due to the finite iterations they take.

\paragraph{Implicit Models.}
Implicit models are emerging structures in deep learning where the outputs are determined implicitly by a solution of some underlying sub-problem. Recent works \citep{bai2019deep} demonstrate the potential of implicit models in sequence modeling, physical engine~\citep{NIPS2018_7948} and many others \citep{chen2018neural, amos2018differentiable}.
\citep{elghaoui2020} proposes a general implicit framework with the prediction rule based on the solution of a fixed-point equilibrium equation and discusses the well-posedness of the implicit prediction rule. 

\paragraph{Oversmoothing.}
To catch the long-range dependency, another intuitive approach is to construct deeper convolutional GNNs by stacking more layers. However, ~\citep{li2018deeper} found that the learned node embeddings become indistinguishable as the convolutional GNNs get deeper. This phenomenon is called \textit{over-smoothing}. Since then, a line of empirical~\citep{li2018deeper,chen2020simple,rong2020dropedge} and theoretical~\citep{oono2020graph,zhao2020pairnorm} works follows on the \textit{over-smoothing} phenomenon. Unlike convolutional GNNs, IGNN adopts a different approach for long-range dependency based on recurrent GNNs and doesn't seem to suffer performance degradation as much even though it could be viewed as an infinite-layer GNN. See appendix \ref{app:oversmooth} for details.


\section{Preliminaries} \label{sec:notation}
Graph neural networks take input data in the form of graphs. A graph is represented by $G = (V, E)$ where $V$ is the set of $n:=|V|$ nodes (or vertices) and $E\subseteq V \times V$ is the set of edges. In practice, we construct an adjacency matrix $A\in \reals^{n\times n}$ to represent the graph $G$: for any two nodes $i, j \in V$, if $(i, j) \in E$, then $A_{ij} = 1$; otherwise, $A_{ij} = 0$. Some data sets provide additional information about the nodes in the form of a feature matrix $U\in \reals^{p\times n}$, in which the feature vector for node $i$ is given by $u_i\in \reals^p$. 
When no additional feature information from nodes is provided, the data sets would require learning a feature matrix $U$ separately in practice. 

Given graph data, graph models produce a prediction $\hat{Y}$ to match the true label $Y$ whose shape depends on the task. GNN models are effective in graph-structured data because they involve trainable aggregation steps that pass the information from each node to its neighboring nodes and then apply nonlinear activation. The aggregation step at iteration $t$ can be written as follows:
\begin{equation} \label{eq:matrix_agg}
    X^{(t+1)} = \phi(W^{(t)}X^{(t)}A + \Omega^{(t)}U),
\end{equation}
where $X^{(t)} \in \reals^{m\times n}$ stacks the state vectors of nodes in time step $t$ into a matrix, in which the state vector for node $i$ is denoted as $x^{(t)}\in \reals^m$; $W^{(t)}$ and $\Omega^{(t)}$ are trainable weight matrices; $\phi$ is an activation function. The state vectors $X^{(T)}$ at final iteration can be used as the representation for nodes that combine input features and graph spatial information. The prediction from the GNN models is given by $\hat{Y} = f_\Theta(X^{(T)})$, where $f_\Theta$ is some trainable function parameterized by $\Theta$. In practice, a linear $f_\Theta$ is often satisfactory.

Modern GNN approaches adopt different forms of graph convolution aggregation (\ref{eq:matrix_agg}). Convolutional GNNs~\citep{wu2019comprehensive} iterate (\ref{eq:matrix_agg}) with $\Omega = 0$ and set $X^{(0)} = U$. Some works temper with the adjacency matrix using renormalization \citep{kipf2016semi} or attention \citep{velivckovic2017graph}). While recurrent GNNs use explicit input from features at each step with tied weights $W$ and $\Omega$, some 
methods replace the term $\Omega U$ with $\Omega_1 U A + \Omega_2U$, in order to account for feature information from neighboring nodes \citep{dai2018learning}. Our framework adopts a similar recurrent graph convolutional aggregation idea. 

A heterogeneous network
 is an extended type of graph that contains different types of relations between nodes instead of only one type of edge. We continue to use $G=(V, \mathcal{E})$ to represent a heterogeneous network with the node set $V$ and the edge set $\mathcal{E} \subseteq V \times V \times R$, where $R$ is a set of $N:=|R|$ relation types. Similarly, we define the adjacency matrices $A_i$, where $A_i$ is the adjacency matrix for relation type $i\in R$. Some heterogeneous networks also have relation-specific feature matrices $U_i$.

\textbf{Notation.}
For a matrix $V\in \reals^{p \times q}$, $|V|$ denotes its absolute value (\ie\; $|V|_{ij} = |V_{ij}|$). The infinity norm, or the max-row-sum norm, writes $\|V\|_\infty$. The 1-norm, or the max-column-sum norm, is denoted as $\|V\|_1 = \|V^\top\|_\infty$. The 2-norm is shown as $\|V\|$ or $\|V\|_2$. We use $\otimes$ to represent the Kronecker product, $\langle\cdot, \cdot\rangle$ to represent inner product and use $\odot$ to represent component-wise multiplication between two matrices of the same shape. For a $p \times q$ matrix $V$, $\vecc(V)\in \reals^{pq}$ represents the vectorized form of $V$, obtained by stacking its columns (See Appendix \ref{app:kron} for details). According to the Perron-Frobenius theory \citep{berman1994nonnegative}, every squared non-negative matrix $M$ has a real non-negative eigenvalue that gives the largest modulus among all eigenvalues of $M$. This non-negative eigenvalue of $M$ is called the \emph{Perron-Frobenius (PF) eigenvalue} and denoted by $\lpf(M)$ throughout the paper.

\section{Implicit Graph Neural Networks} \label{sec:ignn}
We now introduce a framework for graph neural networks called \textbf{I}mplicit \textbf{G}raph \textbf{N}eural \textbf{N}etworks (IGNN), which obtains a node representation through the fixed-point solution of a non-linear ``equilibrium'' equation. 
The IGNN model is formally described by
\begin{subequations}\label{eq:ignn}
\begin{align}
  \hat{Y} &= f_\Theta(X) \label{eq:ignn_rep} ,\\
  X &= \phi(WXA+b_\Omega (U)) \label{eq:ignn_equ} .
\end{align}
\end{subequations}
In equation (\ref{eq:ignn}), the input feature matrix $U\in\reals^{p \times n}$ is passed through some affine transformation $b_\Omega(\cdot)$ parametrized by $\Omega$ (\ie\; a linear transformation possibly offset by some bias). The representation, given as the ``internal state'' $X \in \reals^{m \times n}$ in the rest of the paper, is obtained as the fixed-point solution of the equilibrium equation (\ref{eq:ignn_equ}), where $\phi$ preserves the same shape of input and output. The prediction rule (\ref{eq:ignn_rep}) computes the prediction $\hat{Y}$ by feeding the state $X$ through the output function $f_\Theta$. In practice, a linear map $f_\Theta(X) = \Theta X$ may be satisfactory.

Unlike most existing methods that iterate (\ref{eq:matrix_agg}) for a finite number of steps, an IGNN seeks the fixed point of equation (\ref{eq:ignn_equ}) that is trained to give the desired representation for the task. Evaluation of fixed point can be regarded as iterating (\ref{eq:matrix_agg}) for an infinite number of times to achieve a steady state. Thus, the final representation potentially contains information from all neighbors in the graph. In practice, this gives a better performance over the finite iterating variants by capturing the long-range dependency in the graph. Another notable benefit of the framework is that it is memory-efficient in the sense that it only maintains one current state $X$ without other intermediate representations.

Despite its notational simplicity, the IGNN model covers a wide range of variants, including their multi-layer formulations by stacking multiple equilibrium equations similar to (\ref{eq:ignn_equ}). The SSE \citep{dai2018learning} and FDGNN \citep{gallicchio2019fast} models also fit within the IGNN formulation. We elaborate on this aspect in Appendix \ref{app:example_ignn}.

IGNN models can generalize to heterogeneous networks with different adjacency matrices $A_i$ and input features $U_i$ for different relations. In that case, we have the parameters $W_i$ and $\Omega_i$ for each relation type $i\in R$ to capture the heterogenerity of the graph. A new equilibrium equation (\ref{eq:ignn_h_equ}) is used:
\begin{equation} \label{eq:ignn_h_equ}
  X = \phi \left( \sum_i (W_iXA_i+b_{\Omega_i}(U_i))\right) .
\end{equation}
In general, there may not exist a unique solution for the equilibrium equation (\ref{eq:ignn_equ}) and (\ref{eq:ignn_h_equ}). Thus, the notion of well-posedness comes into play.

\subsection{Well-posedness of IGNNs} \label{sub:wellposedness}
For the IGNN model to produce a valid representation, we need to obtain some \textit{unique} internal state $X(U)$ given any input $U$ from equation (\ref{eq:ignn_equ}) for ordinary graph settings or equation (\ref{eq:ignn_h_equ}) for heterogeneous network settings. However, the equilibrium equation (\ref{eq:ignn_equ}) and (\ref{eq:ignn_h_equ}) can have no well-defined solution $X$ given some input $U$. We give a simple example in the scalar setting in Appendix~\ref{app:wp}, where the solution to the equilibrium equation (\ref{eq:ignn_equ}) does not even exist. 

In order to ensure the existence and uniqueness of the solution to equation (\ref{eq:ignn_equ}) and (\ref{eq:ignn_h_equ}), we define the notion of \emph{well-posedness} for equilibrium equations with activation $\phi$ for both ordinary graphs and hetergeneous networks. This notion has been introduced in \citep{elghaoui2020} for ordinary implicit models.
\begin{definition} [Well-posedness on ordinary graphs]
    The tuple $(W, A)$ of the weight matrix $W\in\reals^{m\times m}$ and the adjacency matrix $A\in\reals^{n\times n}$ is said to be well-posed for $\phi$ if for any $B\in\reals^{m\times n}$, the solution $X\in\reals^{m\times n}$ of the following equation
    \begin{equation} \label{eq:dummy_equ}
        X = \phi (WXA+B)
    \end{equation}
     exists and is unique.
\end{definition}

\begin{definition} [Well-posedness on heterogeneous networks]
    The tuple $(W_i, A_i, i=1, \dots, N)$ of the weight matrices $W_i\in\reals^{m\times m}$ and the adjacency matrices $A_i\in\reals^{n\times n}$ is said to be well-posed for $\phi$ if for any $B_i\in\reals^{m\times n}$, the solution $X\in\reals^{m\times n}$ of the following equation 
    \begin{equation} \label{eq:dummy_equ_h}
        X = \phi \left( \sum_{i=1}^N (W_iXA_i+B_i)\right)
    \end{equation}
    exists and is unique.
\end{definition}

We first develop sufficient conditions for the well-posedness property to hold on ordinary graph settings with a single edge type. The idea is to limit the structure of $W$ and $A$ together to ensure well-posedness for a set of activation $\phi$. 

In the following analysis, we assume that $\phi$ is component-wise non-expansive, which we refer to as the component-wise non-expansive (CONE) property. Most activation functions in deep learning satisfy the CONE property (\eg\; Sigmoid, tanh, ReLU, Leaky ReLU, \etc). For simplicity, we assume that $\phi$ is differentiable. 

We can now establish the following sufficient condition on $(W,A)$ for our model with a CONE activation to be well-posed. Our result hinges on the notion of Perron-Frobenius (PF) eigenvalue $\lpf(M)$ for a non-negative matrix $M$, as well as the notion of Kronecker product $A\otimes B \in \reals^{pm\times qn}$ between two matrices $A\in\reals^{m\times n}$ and $B\in\reals^{p\times q}$.
See Appendix \ref{app:kron} for details.

\begin{theorem} [PF sufficient condition for well-posedness on ordinary graphs] \label{thm:wp}
Assume that $\phi$ is a component-wise non-expansive (CONE) activation map. Then, $(W, A)$ is well-posed for any such $\phi$ if $\lpf(|A^\top \otimes W|) < 1$. Moreover, the solution $X$ of equation (\ref{eq:dummy_equ}) can be obtained by iterating equation (\ref{eq:dummy_equ}).
\end{theorem}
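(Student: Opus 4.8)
The plan is to vectorize the matrix equation, reduce well-posedness to the existence, uniqueness, and iterative computability of a fixed point of a single map on $\reals^{mn}$, and then extract a strict contraction from the Perron-Frobenius hypothesis. First I would use the Kronecker identity $\vecc(WXA) = (A^\top \otimes W)\vecc(X)$ to rewrite (\ref{eq:dummy_equ}) as
\[
x = \phi(Mx + b), \qquad M := A^\top \otimes W,\quad x := \vecc(X),\quad b := \vecc(B),
\]
with $\phi$ still acting component-wise. Setting $T(x) := \phi(Mx+b)$ on $\reals^{mn}$, solving (\ref{eq:dummy_equ}) for a given $B$ is exactly finding a fixed point of $T$, and iterating (\ref{eq:dummy_equ}) is the Picard iteration $x^{(k+1)} = T(x^{(k)})$. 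So it suffices to show $T$ has a unique fixed point reachable by iteration.

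Next I would exploit the CONE property. Since $\phi$ is component-wise non-expansive, $|\phi(u)-\phi(w)| \le |u-w|$ holds entrywise, whence by the triangle inequality
\[
|T(x)-T(y)| = |\phi(Mx+b)-\phi(My+b)| \le |M(x-y)| \le |M|\,|x-y|
\]
entrywise, where $|M| = |A^\top \otimes W|$ is precisely the nonnegative matrix in the hypothesis. The remaining task is to convert this entrywise bound into a strict contraction in a suitably weighted norm.

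This is where the assumption $\lpf(|M|) < 1$ enters. Because $|M|$ is nonnegative, the Collatz-Wielandt characterization $\lpf(|M|) = \min_{v>0}\max_i (|M|v)_i/v_i$ yields a strictly positive vector $v > 0$ and a scalar $\lambda < 1$ with $|M|\,v \le \lambda v$ entrywise. Equipping $\reals^{mn}$ with the weighted sup-norm $\|z\|_v := \max_i |z_i|/v_i$, I would bound, for each coordinate, $(|M|\,|x-y|)_i = \sum_j |M|_{ij}\, v_j\,(|x_j-y_j|/v_j) \le (|M|v)_i\,\|x-y\|_v \le \lambda v_i\,\|x-y\|_v$; dividing by $v_i$, taking the max, and combining with the previous display gives $\|T(x)-T(y)\|_v \le \lambda\,\|x-y\|_v$, so $T$ is a $\lambda$-contraction with $\lambda < 1$. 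The Banach fixed-point theorem then furnishes a unique fixed point $x^\star$ to which the Picard iterates converge from any starting point; reshaping $x^\star$ into a matrix $X$ gives the unique solution of (\ref{eq:dummy_equ}), obtainable by iterating the equation. Since $B$ was arbitrary, $(W,A)$ is well-posed.

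I expect the main obstacle to be the construction of the strictly positive weight $v$. One must guarantee $v > 0$ (not merely $v \ge 0$) together with a genuine gap $\lambda < 1$, which is delicate when $|M|$ is reducible and its Perron eigenvector has vanishing entries. The min-formulation of Collatz-Wielandt handles this cleanly, and as a fallback one can perturb to $|M| + \epsilon\,\ones\ones^\top$, which is strictly positive (hence irreducible) with Perron eigenvalue tending to $\lpf(|M|)$ as $\epsilon \to 0$, apply the Perron-Frobenius theorem to obtain a strictly positive eigenvector, and pass to the limit with $\lambda$ chosen strictly between $\lpf(|M|)$ and $1$ for small $\epsilon$.
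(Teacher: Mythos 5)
Your proof is correct, and it shares the paper's first two moves -- vectorizing via $\vecc(WXA) = (A^\top \otimes W)\vecc(X)$ and using the CONE property to get the entrywise domination $|T(x)-T(y)| \le |M|\,|x-y|$ with $M = A^\top\otimes W$ -- but it then diverges in how the hypothesis $\lpf(|M|)<1$ is exploited. The paper's key lemma (its Lemma B.1) stays with the entrywise bound directly: it shows the Picard iterates form a Cauchy sequence via $|x_n - x_m| \le |M|^m \sum_{i\ge 0} |M|^i\,|x_1-x_0|$, using that $\lpf(|M|)<1$ makes the Neumann series $(I-|M|)^{-1}$ converge and forces $|M|^t \to 0$; uniqueness follows from the same power decay. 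You instead build a strictly positive weight $v$ with $|M|v \le \lambda v$, $\lambda<1$, and turn the map into a genuine $\lambda$-contraction in the weighted sup-norm $\|z\|_v = \max_i |z_i|/v_i$, then invoke Banach. Both are sound; the trade-off is that the paper's route never has to confront the reducibility issue you correctly flag (no positive eigenvector is needed, only convergence of matrix powers), while your route buys an explicit geometric rate in a concrete norm and makes transparent why the PF condition subsumes the classical contraction condition of Remark 4.2 -- it exhibits the PF hypothesis as literally a contraction after a diagonal rescaling, which is also the mechanism behind the paper's Theorem 4.2 via $\lpf(|W|) = \inf_S \|SWS^{-1}\|_\infty$. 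Your handling of the reducible case (taking $v$ from the positive perturbation $|M| + \epsilon\,\ones\ones^\top$, or simply taking any $v>0$ achieving $\max_i (|M|v)_i/v_i \le \lambda$ within the gap, since attainment of the infimum is not needed) is the right fix for the one genuine delicacy in your approach.
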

\begin{proof}
Recall that for any three matrices $A,W,X$ of compatible sizes, we have $(A^\top \otimes W) \vecc(X) = \vecc(WXA)$ \citep{schacke2004kronecker}.
Showing equation (\ref{eq:dummy_equ}) has an unique solution is equivalent to showing that the following ``vectorized'' equation has a unique solution:
$$
\vecc(X) = \phi(A^\top \otimes W \vecc(X) + \vecc(B))
$$
It follows directly from Lemma \ref{lem:wp_orig} that if $\lpf(|A^\top \otimes W|) = \lpf(A)\lpf(|W|) < 1$, then the above equation has unique solution that can be obtained by iterating the equation.
\end{proof}

We find Theorem \ref{thm:wp} so general that many familiar and interesting results will follow from it, as discussed in the following remarks. Detailed explanations can be found in Appendix \ref{app:wp}.

\begin{remark}[Contraction sufficient condition for well-posedness \citep{gori2005new}] \label{rem:4.2}
For any component-wise non-expansive (CONE) $\phi$, if $\mathcal{A}(X) = \phi(WXA+B)$ is a contraction of $X$ (w.r.t. vectorized norms), then $(W,A)$ is well-posed for $\phi$.
\end{remark}

\begin{remark}[Well-posedness for directed acyclic graph] \label{rem:4.3}
For a directed acyclic graph (DAG), let $A$ be its adjacency matrix. For any real squared $W$, it holds that $(W,A)$ is well-posed for every CONE activation map. Note that $\mathcal{A}(X) = \phi(WXA+B)$ need not be a contraction of $X$.
\end{remark}

\begin{remark}[Sufficient well-posedness condition for k-regular graph \citep{gallicchio2019fast}] \label{rem:4.4}
For a k-regular graph, let $A$ be its adjacency matrix. $(W,A)$ is well-posed for every CONE activation map if $k\|W\|_2<1$.
\end{remark}

A similar sufficient condition for well-posedness holds for heterogeneous networks.
\begin{theorem}[PF sufficient condition for well-posedness on heterogeneous networks] \label{thm:wp_h}
Assume that $\phi$ is some component-wise non-expansive (CONE) activation map. Then, $(W_i, A_i,\; i = 1, \dots, N)$ is well-posed for any such $\phi$ if $\lpf\left(\sum_{i=1}^N |A_i^\top \otimes W_i|\right) < 1$. Moreover, the solution $X$ of equation (\ref{eq:dummy_equ_h}) can be obtained by iterating equation (\ref{eq:dummy_equ_h}).
\end{theorem}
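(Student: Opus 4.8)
The plan is to mirror the proof of Theorem~\ref{thm:wp}: vectorize the matrix equilibrium equation so that it becomes a single vector fixed-point equation of the form handled by Lemma~\ref{lem:wp_orig}, and then check that the stated Perron--Frobenius condition suffices to meet that lemma's hypothesis.

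First I would apply the Kronecker identity $(A_i^\top \otimes W_i)\vecc(X) = \vecc(W_i X A_i)$ to each summand of~(\ref{eq:dummy_equ_h}). By linearity of $\vecc(\cdot)$ and of the sum, equation~(\ref{eq:dummy_equ_h}) is equivalent to
\begin{equation*}
\vecc(X) = \phi\left( M \vecc(X) + b \right), \quad M := \sum_{i=1}^N (A_i^\top \otimes W_i), \quad b := \sum_{i=1}^N \vecc(B_i).
\end{equation*}
This is exactly the scalar-form equation to which Lemma~\ref{lem:wp_orig} applies, with coefficient matrix $M$; existence, uniqueness, and convergence of the fixed-point iteration all follow once we establish $\lpf(|M|) < 1$.

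The key step, and the one place where the heterogeneous case genuinely differs from Theorem~\ref{thm:wp}, is bounding $\lpf(|M|)$. In the single-edge setting one has the exact factorization $\lpf(|A^\top \otimes W|) = \lpf(A)\,\lpf(|W|)$, but a sum of Kronecker products admits no such clean product form. Instead I would invoke the entrywise triangle inequality, $|M| = \left|\sum_{i=1}^N (A_i^\top \otimes W_i)\right| \leq \sum_{i=1}^N |A_i^\top \otimes W_i|$, and then use the monotonicity of the PF eigenvalue on nonnegative matrices (a standard consequence of Perron--Frobenius theory: $0 \leq P \leq Q$ entrywise implies $\lpf(P) \leq \lpf(Q)$). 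Combining these gives $\lpf(|M|) \leq \lpf\left(\sum_{i=1}^N |A_i^\top \otimes W_i|\right) < 1$ under the theorem's hypothesis.

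With $\lpf(|M|) < 1$ in hand, Lemma~\ref{lem:wp_orig} delivers a unique $\vecc(X)$, hence a unique $X$, and guarantees that iterating~(\ref{eq:dummy_equ_h}) converges to it. The main obstacle is really just the passage from the exact eigenvalue identity available for a single Kronecker product to the inequality needed for a sum; the load-bearing fact is the entrywise monotonicity of the PF eigenvalue, which lets the sufficient condition be stated on $\sum_{i=1}^N |A_i^\top \otimes W_i|$ rather than on the harder-to-analyze $|M|$ directly.
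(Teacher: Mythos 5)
Your proposal is correct and follows essentially the same route as the paper: vectorize equation (\ref{eq:dummy_equ_h}) via the Kronecker identity and reduce to Lemma \ref{lem:wp_orig}. The only (harmless) difference is that the paper re-runs the lemma's Cauchy-sequence argument with $\sum_{i=1}^N |A_i^\top \otimes W_i|$ as the dominating matrix, whereas you apply the lemma verbatim after bounding $\lpf\bigl(\bigl|\sum_i A_i^\top \otimes W_i\bigr|\bigr)$ by the entrywise triangle inequality and PF monotonicity --- a detail the paper leaves implicit and that you correctly identify as the load-bearing step.
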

We give a complete proof in Appendix \ref{app:wp}. Sufficient conditions in Theorems \ref{thm:wp} and \ref{thm:wp_h} guarantee convergence when iterating aggregation step to evaluate state $X$. Furthermore, these procedures enjoy exponential convergence in practice.

\subsection{Tractable Well-posedness Condition for Training} \label{sub:tractable}
At training time, however, it is difficult in general to ensure satisfaction of the PF sufficient condition $\lpf(|W|)\lpf(A)<1$, because $\lpf(|W|)$ is non-convex in $W$. To alleviate the problem, we give a numerically tractable convex condition for well-posedness that can be enforced at training time efficiently through projection. Instead of using $\lpf(|W|) < \lpf(A)^{-1}$, we enforce the stricter condition $\|W\|_\infty < \lpf(A)^{-1}$, which guarantees the former inequality by $\lpf(|W|) \le \|W\|_\infty$. Although $\|W\|_\infty < \lpf(A)^{-1}$ is a stricter condition, we show in the following theorem that it is equivalent to the PF condition for positively homogeneous activation functions, (\ie\; $\phi(\alpha x) = \alpha \phi(x)$ for any $\alpha \ge 0$ and $x$), in the sense that one can use the former condition at training without loss of generality.

\begin{theorem} [Rescaled IGNN] \label{thm:rescale}
Assume that $\phi$ is CONE and positively homogeneous. For an IGNN ($f_\Theta, W, A, b_\Omega, \phi$) where $(W, A)$ satisfies the PF sufficient condition for well-posedness, namely $\lpf(|W|) < \lpf(A)^{-1}$, there exists a linearly-rescaled equivalent IGNN ($\Tilde{f_\Theta}, W', A, \Tilde{b_\Omega}, \phi$) with $\|W'\|_{\infty} < \lpf(A)^{-1}$ that gives the same output $\hat{Y}$ as the original IGNN for any input U.
\end{theorem}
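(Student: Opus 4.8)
The plan is to produce the rescaled network by a positive diagonal change of variables on the internal state, exploiting positive homogeneity to commute the scaling through the activation $\phi$. Concretely, I would look for a positive diagonal matrix $S = \diag(s)$ with $s > 0$ and set $X' = SX$, $W' = SWS^{-1}$, $\tilde{b_\Omega}(U) = S\, b_\Omega(U)$, and $\tilde{f_\Theta}(X') = f_\Theta(S^{-1}X')$. The whole argument then reduces to two claims: (i) this substitution really does define an equivalent IGNN, and (ii) $S$ can be chosen so that $\|W'\|_\infty < \lpf(A)^{-1}$.

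For claim (i), I would start from the equilibrium equation $X = \phi(WXA + b_\Omega(U))$, substitute $X = S^{-1}X'$, and left-multiply by $S$ to obtain $X' = S\,\phi(WS^{-1}X'A + b_\Omega(U))$. The key step is that, because $\phi$ acts component-wise and is positively homogeneous while $S$ has nonnegative diagonal entries, left-multiplication by $S$ commutes with $\phi$, i.e. $S\phi(Z) = \phi(SZ)$ for every $Z$ (entrywise, $S_{ii}\phi(Z_{ij}) = \phi(S_{ii}Z_{ij})$ since $S_{ii}\ge 0$). Applying this yields $X' = \phi(SWS^{-1}X'A + S\,b_\Omega(U)) = \phi(W'X'A + \tilde{b_\Omega}(U))$, which is exactly the equilibrium equation of the candidate IGNN. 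Since $\tilde{b_\Omega}$ is affine (a composition of $b_\Omega$ with a linear map) and $\tilde{f_\Theta}$ is $f_\Theta$ precomposed with a linear map, the candidate is a genuine IGNN; and because $X \mapsto SX$ is a bijection on the state space, $X$ solves the original equilibrium if and only if $X' = SX$ solves the new one, so $\tilde{f_\Theta}(X') = f_\Theta(S^{-1}X') = f_\Theta(X) = \hat{Y}$ for every input $U$. I would also note $\lpf(|W'|) = \lpf(S|W|S^{-1}) = \lpf(|W|)$, so $(W',A)$ still satisfies the PF condition and the rescaled model is well-posed by Theorem \ref{thm:wp}.

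The crux, and the main obstacle, is claim (ii): choosing $S$ to enforce the stricter $\infty$-norm bound. Here I would invoke the classical fact that the PF eigenvalue of a nonnegative matrix is the infimum of its diagonally-scaled max-row-sum norms. Since $\lpf(|W|) < \lpf(A)^{-1}$, pick $\delta > 0$ small and consider the strictly positive matrix $|W| + \delta\,\ones\ones^\top$; by Perron-Frobenius it admits a strictly positive eigenvector $v$ with eigenvalue $\lambda_\delta = \lpf(|W| + \delta\,\ones\ones^\top)$. Setting $S = \diag(v)^{-1}$ makes the $i$-th row sum of $S|W|S^{-1}$ equal to $(|W|v)_i / v_i \le \lambda_\delta$, so $\|W'\|_\infty = \|S|W|S^{-1}\|_\infty \le \lambda_\delta$, where $\|SWS^{-1}\|_\infty = \|S|W|S^{-1}\|_\infty$ because the $\infty$-norm composed with positive diagonal scaling depends only on entrywise magnitudes. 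Continuity of the PF eigenvalue then gives $\lambda_\delta \to \lpf(|W|)$ as $\delta \to 0$, so for $\delta$ small enough $\lambda_\delta < \lpf(A)^{-1}$, delivering $\|W'\|_\infty < \lpf(A)^{-1}$ and completing the construction. The delicate points to get right are the strict-versus-nonstrict inequalities and the perturbation of $|W|$ to a strictly positive matrix, which is what guarantees a strictly positive Perron eigenvector and hence an invertible scaling $S$.
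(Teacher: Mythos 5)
Your proof is correct and follows essentially the same route as the paper's: a positive diagonal change of variables $X' = SX$ with $W' = SWS^{-1}$, $\tilde{b_\Omega} = S\,b_\Omega$, $\tilde{f_\Theta} = f_\Theta(S^{-1}\cdot)$, commuted through $\phi$ by positive homogeneity and justified by the characterization $\lpf(|W|) = \inf_S \|SWS^{-1}\|_\infty$ over positive diagonal $S$. If anything, your perturbation argument via $|W| + \delta\,\ones\ones^\top$ is more complete than the paper's, which only treats the case where $|W|$ has a simple PF eigenvalue with a strictly positive eigenvector; since only the strict inequality $\|W'\|_\infty < \lpf(A)^{-1}$ is needed rather than attainment of the infimum, your near-optimal scaling suffices and closes that gap.
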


The proof is given in Appendix \ref{app:wp}. The above-mentioned condition can be enforced by selecting a $\kappa \in [0, 1)$ and projecting the updated $W$ onto the convex constraint set $\mathcal{C} = \{W:\|W\|_\infty \le \kappa/\lpf(A) \}$. 

For heterogeneous network settings, we recall the following:
\begin{remark} \label{rem:4.5}
For any non-negative adjacency matrix $A$ and arbitrary real parameter matrix $W$, it holds that $\|A^\top \otimes W\|_\infty = \|A^\top\|_\infty \|W\|_\infty = \|A\|_1 \|W\|_\infty$. 
\end{remark}

Similar to the difficulty faced in the ordinary graph settings, ensuring the PF sufficient condition on heterogeneous networks is hard in general. We propose to enforce the following tractable condition that is convex in $W_i$'s: $\sum_{i=1}^N \|A_i\|_1 \|W_i\|_\infty \le \kappa < 1$, $\kappa\in[0, 1)$. Note that this condition implies $\left\|\sum_{i=1}^N A_i^\top \otimes W_i\right\|_\infty \le \kappa$, and thus $\lpf\left(\sum_{i=1}^N |A_i^\top \otimes W_i|\right) \le \kappa < 1$. The PF sufficient condition for well-posedness on heterogeneous networks is then guaranteed.


\subsection{Training of IGNN}
We start by giving the training problem (\ref{eq:training}), where a loss $\mathcal{L}(Y, \hat{Y})$ is minimized to match $\hat{Y}$ to $Y$ and yet the tractable condition $\|W\|_\infty \le \kappa/\lpf(A)$ for well-posedness is enforced with $\kappa\in[0, 1)$:

\begin{equation} \label{eq:training}
    \min_{\Theta, W,\Omega} \;\mathcal{L}(Y, f_\Theta(X)) \; : \;\; X = \phi(WXA+b_\Omega (U)), \;\; \|W\|_\infty \le \kappa /\lpf(A).
\end{equation}

The problem can be solved by projected gradient descent (involving a projection to the well-posedness condition following a gradient step), where the gradient is obtained through an implicit differentiation scheme. From the chain rule, one can easily obtain $\nabla_\Theta \mathcal{L}$ for the parameter of $f_\Theta$ and $\nabla_X \mathcal{L}$ for the internal state $X$. In addition, we can write the gradient with respect to scalar $q \in W \cup \Omega$ as follows:
\begin{equation} \label{eq:q_grad}
    \nabla_{q} \mathcal{L} = \left\langle \frac{\partial \left(WXA + b_\Omega (U)\right)}{\partial q}, \nabla_Z \mathcal{L}\right\rangle,
\end{equation}
where $Z = WXA + b_\Omega (U)$ \emph{assuming fixed $X$} (see Appendix \ref{app:gradient}). Here, $\nabla_Z \mathcal{L}$ 
is given as a solution to the equilibrium equation
\begin{equation} \label{eq:gradient_z}
    \nabla_Z \mathcal{L} = D \odot \left(W^\top \nabla_Z \mathcal{L}\;A^\top + \nabla_X \mathcal{L}\right),
\end{equation}
where $D = \phi'(WXA+b_\Omega(U))$ and $\phi'(z) = d\phi(z)/d z$ refers to the element-wise derivative of the CONE map $\phi$. Since $\phi$ is non-expansive, it is 1-Lipschitz (\ie\; the absolute value of $d\phi(z)/d z$ is not greater than 1), the equilibrium equation (\ref{eq:gradient_z}) for gradient $\nabla_Z \mathcal{L}$ admits a \textit{unique} solution by iterating (\ref{eq:gradient_z}) to convergence, if $(W,A)$ is well-posed for any CONE activation $\phi$. (Note that $D\odot (\cdot)$ can be seen as a CONE map with each entry of $D$ having absolute value less than or equal to 1.) Again, $\nabla_Z \mathcal{L}$ can be efficiently obtained due to exponential convergence when iterating (\ref{eq:gradient_z}) in practice.

Once $\nabla_Z \mathcal{L}$ is obtained, we can use the chain rule (via autograd software) to easily compute $\nabla_W \mathcal{L}$, $\nabla_\Omega \mathcal{L}$, and possibly $\nabla_U \mathcal{L}$ when input $U$ requires gradients (\eg\;in cases of features learning or multi-layer formulation). The deriviation has a deep connection to the Implicit Function Theorm. See Appendix \ref{app:gradient} for details.  

Due to the norm constraint introduced for well-posedness, each update to $W$ requires a projection step (See Section \ref{sub:wellposedness}). The new $W$ is given by $W^+ = \pi_{\mathcal{C}}(W) := \argmin_{\|M\|_\infty \le \kappa/\lpf(A)} \|M-W\|_F^2$, where $\pi_{\mathcal{C}}$ is the projection back onto $\mathcal{C} = \{\|W\|_\infty \le \kappa/\lpf(A)\}$. The projection is decomposible across the rows of $W$. Each sub-problem will be a projection onto an $\mathcal{L}_1$-ball for which efficient methods exist \citep{duchi2008efficient}. A similar projected gradient descent training scheme for heterogeneous network settings is detailed in Appendix \ref{app:gradient}. Note that the gradient method in SSE \citep{dai2018learning} uses a first-order approximated solution to equation (\ref{eq:gradient_z}). FDGNN \citep{gallicchio2019fast} only updates $\Theta$ at training using gradient descent.


\section{Numerical Experiment} \label{sec:numerical} 
In this section, we demonstrate the capability of IGNN on effectively learning a representation that captures the long-range dependency and therefore offers the state-of-the-art performance on both synthetic and real-world data sets.
More specifically, we test IGNN against a selected set of baselines on 6 node classification data sets (Chains, PPI, AMAZON, ACM, IMDB, DBLP) and 5 graph classification data sets (MUTAG, PTC, COX2, PROTEINS, NC11), where Chains is a synthetic data set; PPI and AMAZON are multi-label classification data sets; ACM, IMDB and DBLP are based on heterogeneous networks. We inherit the same experimental settings and reuse the 
results of baselines from literatures in some of the data sets. The test set performance is reported. Detailed description of the data sets, our preprocessing procedure, hyper-parameters, and other information of experiments can be found in Appendix \ref{app:more experiments}.

\paragraph{Synthetic Chains Data Set.}

To evaluate GNN's capability for capturing the underlying long-range dependency in graphs, we create the Chains data set where the goal is to classify nodes in a chain of length $l$. The information of the class is only sparsely provided as the feature in an end node. We use a small training set, validation set, and test set with only 20, 100, and 200 nodes, respectively. For simplicity, we only consider the binary classification task. Four representative baselines are implemented and compared. We show in Figure~\ref{fig:chains} that IGNN and SSE~\citep{dai2018learning} both capture the long-range dependency with IGNN offering 
a better performance for longer chains, while finite-iterating convolutional GNNs with $T=2$, including GCN~\citep{kipf2016semi}, SGC~\citep{pmlrv97wu19e} and GAT~\citep{velivckovic2017graph}, fail to give meaningful predictions when the chains become longer. However, selecting a larger $T$ for convolutional GNNs does not seem to help in this case of limited training data. We further discuss this aspect in Appendix \ref{app:more experiments}. 

\begin{figure}[htbp]
	\centering
	\begin{minipage}{0.45\textwidth}
		\centering
		\caption{Micro-$F_{1}$ (\%) performance with respect to the length of the chains.}
		\label{fig:chains}
		\includegraphics[width=0.95\textwidth]{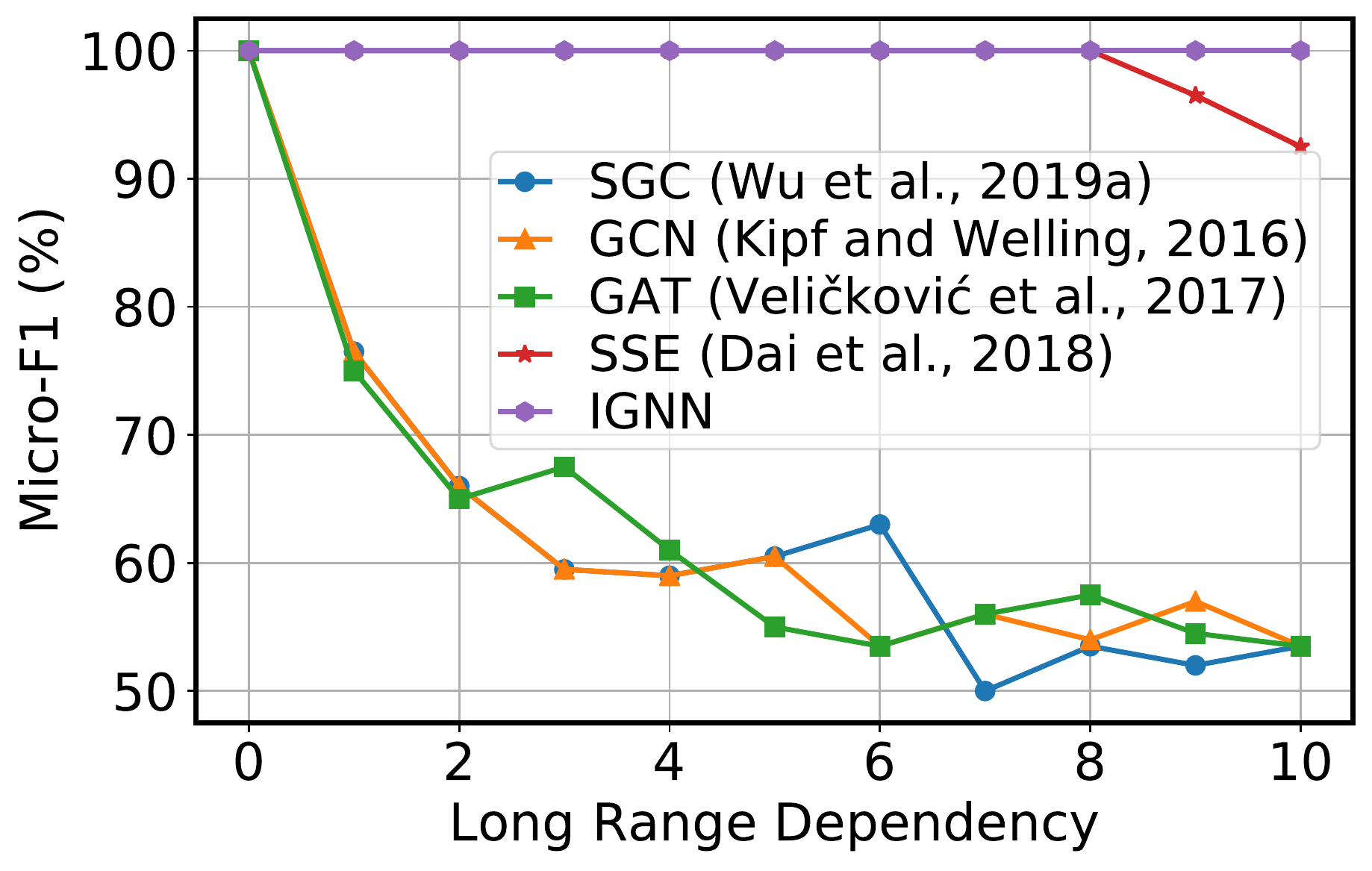}
	\end{minipage}
	\hfill
	\begin{minipage}{0.48\textwidth}
	    \vspace{-5mm}
	    \captionof{table}{Multi-label node classification Micro-$F_{1}$ (\%) performance on PPI data set. \label{tab:PPI}}
        \centering
        \resizebox{0.99\textwidth}{!}{%
        \begin{tabular}{lcc@{}}
        \toprule
        Method & Micro-$F_{1}$ /\% \\
        \midrule
        Multi-Layer Perceptron & $46.2$ \\ 
        GCN~\citep{kipf2016semi} & $59.2$ \\ 
        GraphSAGE~\citep{hamilton2017inductive} & $78.6$  \\
        SSE~\citep{dai2018learning} & $83.6$  \\
        GAT~\citep{velivckovic2017graph} & $97.3$ \\ 
        \midrule
        IGNN & $\mathbf{97.6}$ \\
        \bottomrule
        \end{tabular}
        }
	\end{minipage}
\end{figure}

\paragraph{Node Classification.}

The popular benchmark data set Protein-Protein Interaction (PPI) models the interactions between proteins using a graph, with nodes being proteins and edges being interactions. 
Each protein can have at most 121 labels and be associated with additional 50-dimensional features. The train/valid/test split is consistent with GraphSage~\citep{hamilton2017inductive}. We report the micro-averaged $F_{1}$ score of a multi-layer IGNN against other popular baseline models. The results can be found in Table~\ref{tab:PPI}. By capturing the underlying long-range dependency between proteins, the IGNN achieves the best performance compared to other baselines.

\begin{figure}[htbp]
    \caption{Micro/Macro-$F_{1}$  (\%) performance on the multi-label node classification task with Amazon product co-purchasing network data set. \label{fig:amazon}}
	\centering
	\begin{minipage}{0.45\textwidth}
		\centering
		\includegraphics[width=0.95\textwidth]{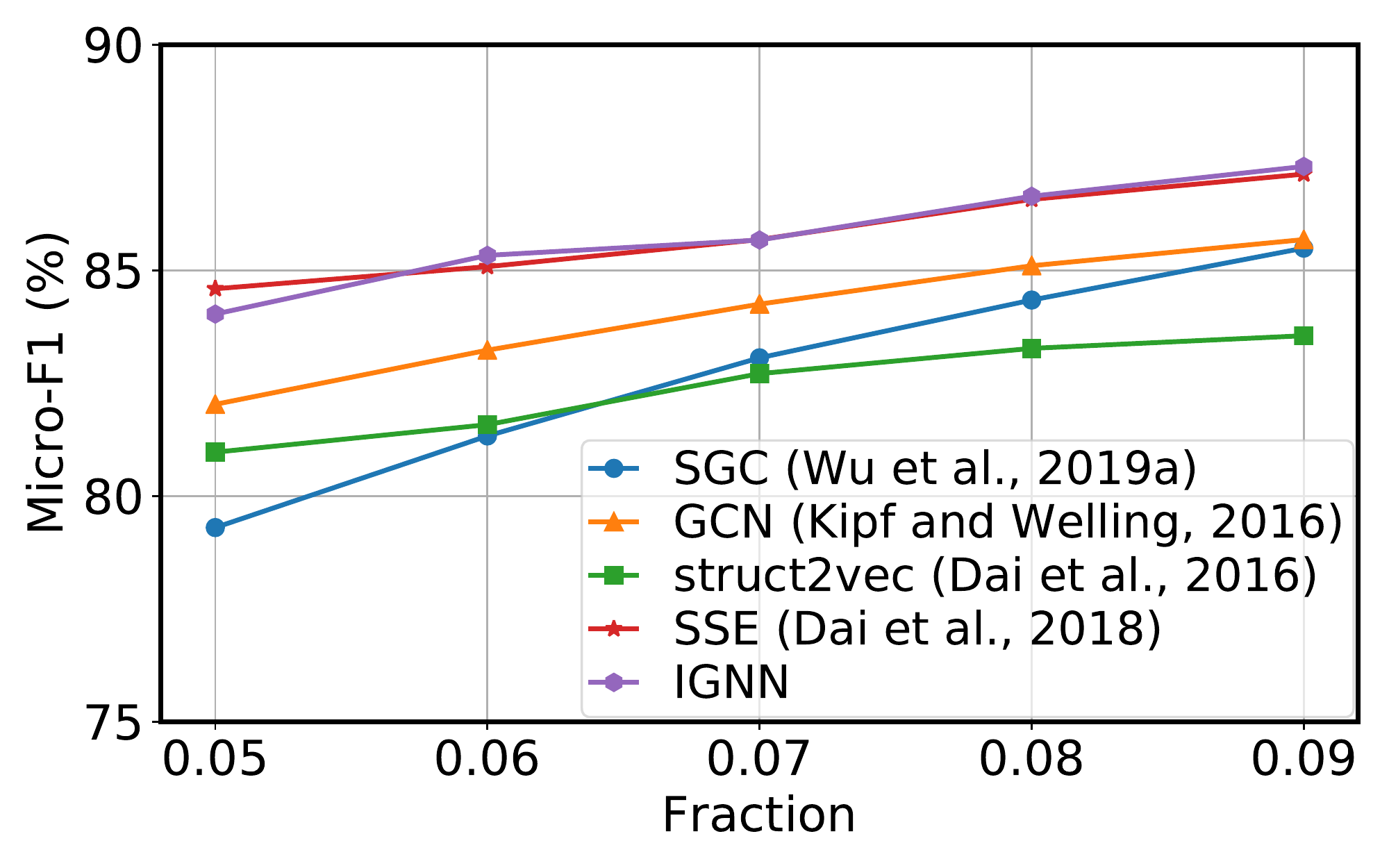}
	\end{minipage}
	\hfill
	\begin{minipage}{0.45\textwidth}
		\centering
		\includegraphics[width=0.95\textwidth]{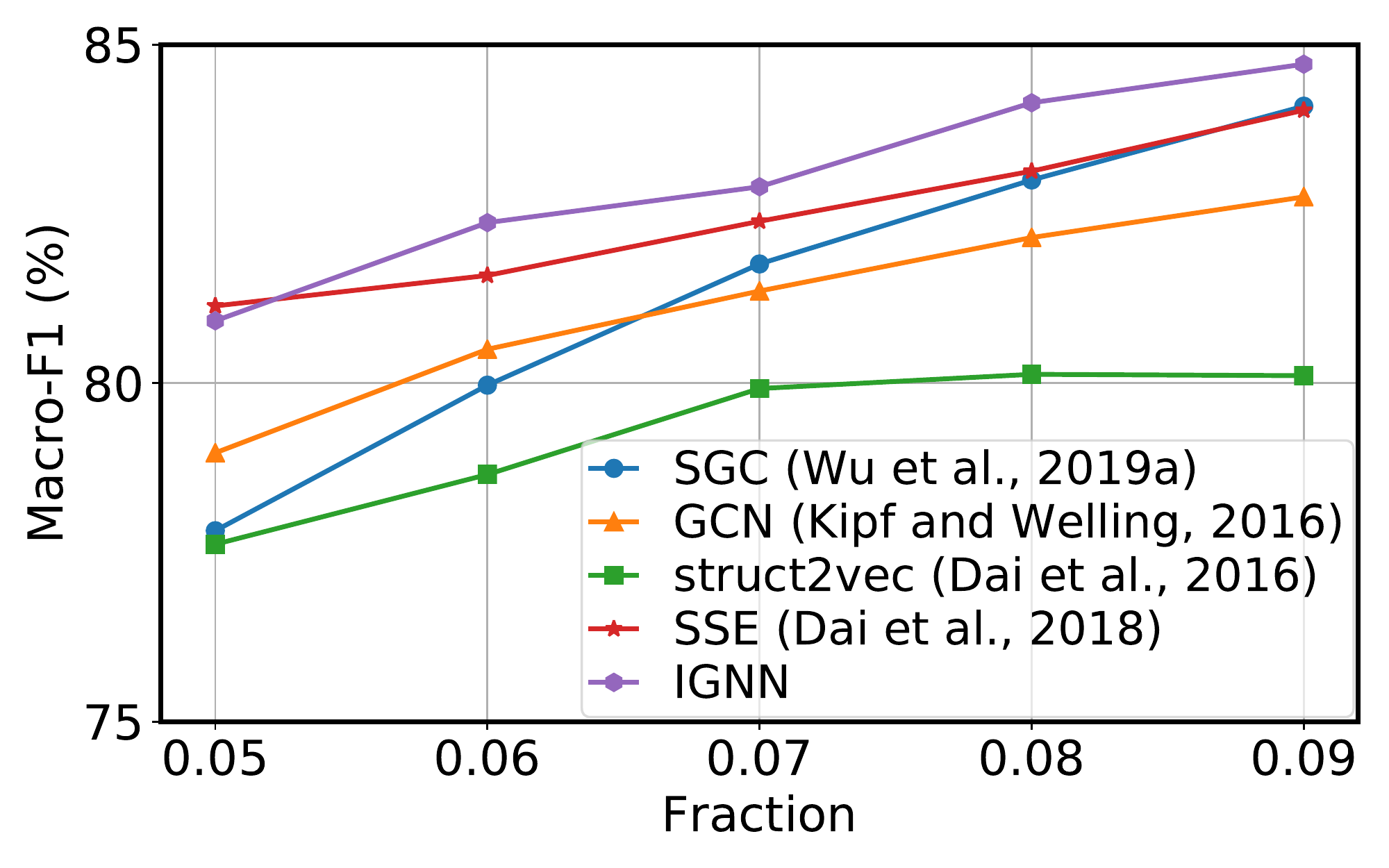}
	\end{minipage}
	\vspace{-3mm}
\end{figure}


To further manifest the scalability of IGNN towards larger graphs, we conduct experiments on a large multi-label node classification data set, namely the Amazon product co-purchasing network data set~\citep{yang2015defining} \footnote{http://snap.stanford.edu/data/\#amazon}. The data set renders products as nodes and co-purchases as edges but provides no input features. 58 product types with more than 5,000 products are selected from a total of 75,149 product types. While holding out 10\% of the total nodes as test set, we vary the training set fraction from 5\% to 9\% to be consistent with~\citep{dai2018learning}. The data set come with no input feature vectors and thus require feature learning at training.
Both Micro-$F_{1}$ and Macro-$F_{1}$ are reported on the held-out test set, where we compare IGNN with a set of baselines consistent with those in the synthetic data set. However, we use struct2vec~\citep{dai2016discriminative} as an alternative to GAT since GAT faces a severe out-of-memory issue in this task.

As shown in Figure~\ref{fig:amazon}, IGNN again outperforms the baselines in most cases, especially when the amount of supervision grows. When more labels are available, more high-quality feature vectors of the nodes are learned and this enables the discovery of more long-range dependency. This phenomenon is aligned with our observation that IGNN achieves a better performance when there is more long-range dependency in the underlying graph.


\paragraph{Graph Classification.} 
Aside from node classification, we test IGNN on graph classification tasks. A total of 5 bioinformatics benchmarks are chosen: MUTAG, PTC, COX2, NCI1 and PROTEINS~\citep{yanardag2015deep}. See details of data sets in Appendix \ref{app:more experiments}. Under the graph classification setting, we compare a multi-layer IGNN with a comprehensive set of baselines, including a variety of GNNs and a number of graph kernels. Following identical settings as \citep{yanardag2015deep,xu2018powerful}, 10-fold cross-validation with LIB-SVM~\citep{chang2011libsvm} is conducted. The average prediction accuracy and standard deviations are reported in Table~\ref{tab:graph classification}. In this experiment, IGNN achieves the best performance in 4 out of 5 experiments given the competitive baselines. Such performance further validates IGNN's success in learning converging aggregation steps that capture long-range dependencies when generalized to unseen testing graphs. 

\begin{table*}[tbh]
  \caption{Graph classification accuracy (\%).
  Results are averaged (and std are computed) on the outer 10 folds.}
  \vspace{1mm}
  \label{tab:graph classification}
  \centering
  \small
  \resizebox{0.9\textwidth}{!}{%
  \begin{tabular}{cccccc}
    \toprule
    
    Data sets & MUTAG & PTC & COX2 & PROTEINS & NCI1\\
    \# graphs & 188 & 344 & 467 & 1113 & 4110 \\
    Avg \# nodes & 17.9 & 25.5 & 41.2 & 39.1 & 29.8 \\
    \midrule
    DGCNN~\citep{zhang2018end}
            & $85.8$ & $58.6$ &
              $-$                  & $75.5$ &
              $74.4$\\
    DCNN~\citep{atwood2016diffusion}
            & $67.0$ &
            $56.6$ &
              $-$                  & 
              $61.3$ &
              $62.6$\\

    GK~\citep{shervashidze2009efficient}
            & $81.4 \pm 1.7$ & $55.7 \pm 0.5$ &
              $-$                  & $71.4 \pm 0.3 $ &
              $62.5 \pm 0.3$\\

    RW~\citep{gartner2003graph}
            & $79.2 \pm 2.1 $ & $55.9 \pm 0.3$ &
              $-$   & $59.6 \pm 0.1$ &
              $-$\\
    PK~\citep{neumann2016propagation}
            & $76.0 \pm 2.7$ & $59.5 \pm 2.4$ &
            $81.0 \pm 0.2$  & $73.7 \pm 0.7$ &
              $82.5 \pm 0.5$\\
    WL~\citep{shervashidze2011weisfeiler}
            & $84.1 \pm 1.9$ & $58.0 \pm 2.5$ &
            $83.2 \pm 0.2$  & $74.7 \pm 0.5$ &
              $\mathbf{84.5 \pm 0.5}$\\
    FDGNN~\citep{gallicchio2019fast}  & $88.5 \pm 3.8$  & 
              $63.4 \pm 5.4$ &
              $83.3 \pm 2.9$ &  
              $76.8 \pm 2.9$ &  
              $77.8 \pm 1.6$ \\
    GCN~\citep{kipf2016semi}  & $85.6 \pm 5.8$  & 
              $64.2 \pm 4.3$ &
              $-$ &  
              $76.0 \pm 3.2$ &  
              $80.2 \pm 2.0$ \\
    GIN~\citep{xu2018powerful}  & $89.0 \pm 6.0$  & 
              $63.7 \pm 8.2$ &
              $-$ &  
              $75.9 \pm 3.8$ &  
              $82.7 \pm 1.6$ \\
    \midrule
    IGNN  & $\mathbf{89.3\pm6.7}$  & 
              $\mathbf{70.1\pm5.6}$ &
              $\mathbf{86.9\pm4.0}$ &  
              $\mathbf{77.7\pm3.4}$ &  
              $80.5\pm1.9$ \\
    \bottomrule
  \end{tabular}
  }
\end{table*}

\paragraph{Heterogeneous Networks.}
Following our theoretical analysis on heterogeneous networks, we investigate how IGNN takes advantage of heterogeneity on node classification tasks. Three benchmarks based on heterogeneous network are chosen, \emph{i.e.}, ACM, IMDB and DBLP \citep{wang2019heterogeneous,park2019unsupervised}. More information regarding the heterogeneous network data sets can be found in Appendix \ref{app:more experiments}. Table~\ref{tab:Heterogeneous Networks} compares IGNN against a set of state-of-the-art GNN baselines for heterogeneous networks. The heterogeneous variant of IGNN continues to offer a competitive performance on all 3 data sets where IGNN gives the best performance in ACM and IMDB data sets. While on DBLP, IGNN underperforms DMGI but still outperforms other baselines by large margin. Good performance on heterogeneous networks demonstrates 
the flexibility of IGNN on handling heterogeneous relationships.

\begin{table*}[th]
\caption{ Node classification Micro/Macro-$F_{1}$ (\%) performance on heterogeneous network data sets.  \label{tab:Heterogeneous Networks}}
\vspace{-1mm}
\begin{center}
\begin{small}
\resizebox{0.9\textwidth}{!}{%
\begin{tabular}{@{}cCCCCCC@{}}
\toprule
\multicolumn{1}{c}{Data sets} &  \multicolumn{2}{c}{ACM} & \multicolumn{2}{c}{IMDB} & \multicolumn{2}{c}{DBLP} \\ 
\cmidrule(lr){2-3} \cmidrule(lr){4-5} \cmidrule(lr){6-7}
Metric & \text{Micro-$F_{1}$} & \text{Macro-$F_{1}$} & \text{Micro-$F_{1}$} & \text{Macro-$F_{1}$} & \text{Micro-$F_{1}$} & \text{Macro-$F_{1}$} \\
\midrule
DGI~\citep{velivckovic2018deep} & 88.1 & 88.1 & 60.6 & 59.8 & 72.0 & 72.3\\
GCN/GAT & 87.0 & 86.9 & 61.1 & 60.3 & 71.7 & 73.4  \\
DeepWalk~\citep{perozzi2014deepwalk} & 74.8 & 73.9 & 55.0 & 53.2 & 53.7 & 53.3 \\
mGCN~\citep{ma2019multi} & 86.0 & 85.8 & 63.0 & 62.3 & 71.3 & 72.5\\
HAN~\citep{wang2019heterogeneous} & 87.9 & 87.8 & 60.7 & 59.9 & 70.8 & 71.6\\
DMGI~\citep{park2019unsupervised} & 89.8 & 89.8 & 64.8 & 64.8 & \textbf{76.6} & \textbf{77.1} \\
\midrule
IGNN & \textbf{90.5} & \textbf{90.6} & \textbf{65.5} & \textbf{65.5} & 73.8 & 75.1\\
\bottomrule
\end{tabular}
}
\end{small}
\end{center}
\vspace{-6mm}
\end{table*}

\section{Conclusion}
In this paper, we present the implicit graph neural network model, a framework of recurrent graph neural networks. We describe  a sufficient condition for well-posedness based on the Perron-Frobenius theory and a projected gradient decent method for training. Similar to some other recurrent graph neural network models, implicit graph neural network captures the long-range dependency, but it carries the advantage further with a superior performance in a variety of tasks, through rigorous conditions for convergence and exact efficient gradient steps. More notably, the flexible framework extends to heterogeneous networks where it maintains its competitive performance.

\section*{Broader Impact}
GNN models are widely used on applications involving graph-structured data, including computer vision, recommender systems, and biochemical strucature discovery. Backed by more rigorous mathematical arguments, our research improves the capability GNNs of capturing the long-range dependency and therefore boosts the performance on these applications.

The improvements of performance in the applications will give rise to a better user experience of products and new discoveries in other research fields. But like any other deep learning models, GNNs runs into the problem of interpretability. The trade-off between performance and interpretability has been a topic of discussion. On one hand, the performance from GNNs benefits the tasks. On the other hand, the lack of interpretability might make it hard to recognize underlying bias when applying such algorithm to a new data set. Recent works \citep{hardt2016equality} propose to address the fairness issue by enforcing the fairness constraints.

While our research focuses on performance by capturing the long-range dependency, like many other GNNs, it does not directly tackle the fairness and interpretability aspect. We would encourage further work on fairness and interpretability on GNNs.
Another contribution of our research is on the analysis of heterogeneous networks, where the fairness on treatment of different relationships remains unexplored. The risk of discrimination in particular real-world context might require cautious handling when researchers develop models.



\begin{ack}

Funding in direct support of this work: National Key Research and Development Program of China (No. 2020AAA0107800, 2018AAA0102000), ONR Award N00014-18-1-2526, and other funding from Berkeley Artificial Intelligence Lab, Pacific Extreme Event Research Center, Genentech, Tsinghua-Berkeley Shenzhen Institute and National Natural Science Foundation of China Major Project (No. U1611461).
\end{ack}


\bibliographystyle{apalike}
\bibliography{ref.bib}

%
%
%
%
%
\clearpage
\appendix
{\centering{{\LARGE\bfseries Supplementary material}}}

\section{Kronecker Product} \label{app:kron}
For two matrices $A$ and $B$, the Kronecker product of $A\in\reals^{m\times n}$ and $B\in\reals^{p\times q}$ is denoted as $A\otimes B \in \reals^{pm\times qn}$:
$$
A\otimes B = \begin{pmatrix}
A_{11} B & \cdots & A_{1n}B \\
\vdots & \ddots & \vdots \\
A_{m1} B & \cdots & A_{mn}B \\
\end{pmatrix}.
$$
By definition of the Kronecker product, $(A\otimes B)^\top = A^\top \otimes B^\top$. 
Additionally, the following equality holds assuming compatible shapes, $(A^\top \otimes W) \vecc(X) = \vecc(WXA)$ \citep{schacke2004kronecker}, where $\vecc(X)\in \reals^{mn}$ denotes the vectorization of matrix $X\in\reals^{m\times n}$ by stacking the columns of $X$ into a single column vector of dimension $mn$. Suppose $x_i \in\reals^m$ is the $i$-th column of $X$, $\vecc(X) = [x_1^\top, \dots, x_n^\top]^\top$.

Leveraging the definition of Kronecker product and vectorization, the following equality holds, $(A^\top \otimes W) \vecc(X) = \vecc(WXA)$ \citep{schacke2004kronecker}. Intuitively, this equality reshapes $WXA$ which is linear in $X$ into a more explicit form $(A^\top \otimes W) \vecc(X)$ which is linear in $\vecc(X)$, a flattened form of $X$. Through the transformation, we place $WXA$ into the form of $Mx$. Thus, we can employ Lemma \ref{lem:wp_orig} to obtain the well-posedness conditions.

\section{Well-posedness of IGNN: Illustration, Remarks, and Proof} \label{app:wp}
\subsection{A Scalar Example} \label{app:wp_illus}
Consider the following scalar equilibrium equation (\ref{eq:scalar_example}),
\begin{equation} \label{eq:scalar_example}
    x = \text{ReLU}(wxa+u),
\end{equation}
where $x,w,a,u\in \reals$ and $\text{ReLU}(\cdot) = \max (\cdot, 0)$ is the rectified linear unit. If we set $w = a = 1$, the equation (\ref{eq:scalar_example}) will have no solutions for any $u > 0$. See Figure \ref{fig:scalar_example} for the example with $u=1$.
\begin{figure}[h]
    \centering
    \includegraphics[width=0.35\columnwidth]{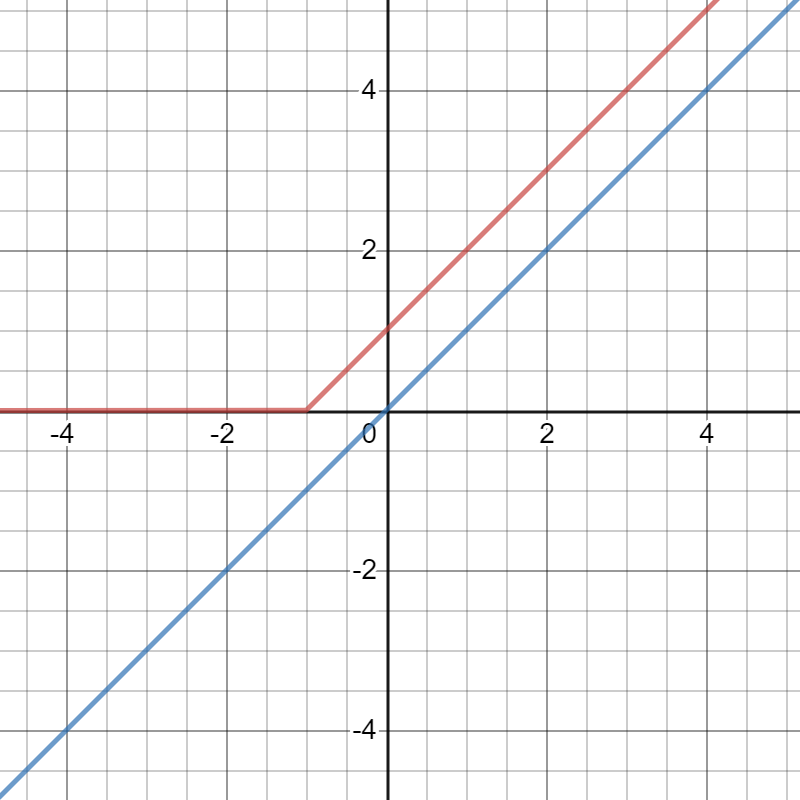}
    \caption{Plots of $x$ (red plot) and $\text{ReLU}(wxa+u) = \text{ReLU}(x+1)$ with $w=a=u=1$ (blue plot). The two plots will intersect at some point whenever a solution exists. However in this case the two plots have no intersections, meaning that there is no solution to equation (\ref{eq:scalar_example}).}
    \label{fig:scalar_example}
\end{figure}

\subsection{Detailed Explanation for Remarks} \label{app:rem}
\begin{remark}
For some non-negative adjacency matrix $A$, and arbitrary real parameter matrix $W$, $\lpf(|A^\top \otimes W|) = \lpf(A^\top \otimes |W|) = \lpf(A)\lpf(|W|)$. 
\end{remark}
The final equality of the above remark follows from the fact that, the spectrum of the Kronecker product of matrix $A$ and $B$ satisfies that $\Delta(A\otimes B) = \{\mu\lambda: \mu\in\Delta(A),\; \lambda\in\Delta(B)\}$, where $\Delta(A)$ represents the spectrum of matrix $A$. And that, the left and right eigenvalues of a matrix are the same.


We find Theorem \ref{thm:wp} to be quite general. We show that many familiar and interesting results following from it.

\begin{remark}[\ref{rem:4.2}, Contraction sufficient condition for well-posedness \citep{gori2005new}]
For any component-wise non-expansive (CONE) $\phi$, if $\mathcal{A}(X) = \phi(WXA+B)$ is a contraction of $X$ (w.r.t. vectorized norms), then $(W,A)$ is well-posed for $\phi$.
\end{remark}
The above remark follows from the fact that the contraction condition for any CONE activation map is equivalent to $\|A^\top \otimes W\| < 1$, which implies $\lpf(|A^\top \otimes W|)<1$.

\begin{remark}[\ref{rem:4.3}, Well-posedness for directed acyclic graph]
For a directed acyclic graph (DAG), let $A$ be its adjacency matrix. For any real squared $W$, we always have that $(W,A)$ is well-posed for any CONE activation map. Note that in this case $\mathcal{A}(X) = \phi(WXA+B)$ needs not be a contraction of $X$.
\end{remark}
Note that for DAG, $A$ is nilpotent ($ \lpf(A)=0 $) and thus $\lpf(|A^\top \otimes W|) = \lpf(A)\lpf(|W|) = 0$.

\begin{remark}[\ref{rem:4.4}, Sufficient well-posedness condition for k-regular graph \citep{gallicchio2019fast}]
For a k-regular graph, let $A$ be its adjacency matrix. $(W,A)$ is well-posed for any CONE activation map if $k\|W\|_2<1$.
\end{remark}
It follows from that for a k-regular graph, the PF eigenvalue of the adjacency matrix $\lpf(A)=k$. And $ \lpf(A)\lpf(|W|) \le k\|W\|_2 < 1$ guarantees well-posedness.

\begin{remark}[\ref{rem:4.5}]
For some non-negative adjacency matrix $A$, and arbitrary real parameter matrix $W$, $\|A^\top \otimes W\|_\infty = \|A^\top\|_\infty \|W\|_\infty = \|A\|_1 \|W\|_\infty$. 
\end{remark}

The above remark follows from the facts that, $\|\cdot\|_\infty$ (resp. $\|\cdot\|_1$) gives maximum row (resp. column) sum of the absolute values of a given matrix. And that, for some real matrices $A$ and $B$, $\|A\otimes B\|_\infty = \max_{i, j}\left(\sum_{k, l} |A_{ik} B_{jl}|\right) = \max_{i, j}\left(\sum_k |A_{ik}|\;  \sum_l |B_{jl}|\right) = \max_i\left(\sum_k |A_{ik}|\right)\; \max_j\left(\sum_l |B_{jl}|\right) = \|A\|_\infty \|B\|_\infty$.

\subsection{An Important Lemma for Well-posedness} \label{app:proof_lem_orig}
\begin{lemma} \label{lem:wp_orig}
If $\phi$ is component-wise non-negative (CONE), $M$ is some squared matrix and $v$ is any real vector of compatible shape, the equation $x = \phi(Mx+v)$ has a unique solution if $\lpf(|M|)<1$. And the solution can be obtained by iterating the equation. Hence, $x = \lim_{t\rightarrow \infty} x_t$.
\begin{equation} \label{eq:lemma_pf}
    x_{t+1} = \phi(Mx_t+v),\; x_0 = 0,\; t = 0, 1, \dots
\end{equation}
\end{lemma}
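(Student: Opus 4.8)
The plan is to treat $\mathcal{A}(x) = \phi(Mx + v)$ as a map whose iterates contract in the component-wise partial order governed by the non-negative matrix $|M|$, and then exploit $\lpf(|M|) < 1$ to force both convergence of the iteration (\ref{eq:lemma_pf}) and uniqueness of the fixed point. The one structural fact I would lean on throughout is that the CONE property means $|\phi(z) - \phi(z')| \le |z - z'|$ holds entrywise for all vectors $z, z'$, together with the elementary observation that multiplying a non-negative matrix against an entrywise inequality between non-negative vectors preserves that inequality.

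First I would control consecutive iterates. Writing $x_{t+1} - x_t = \phi(Mx_t + v) - \phi(Mx_{t-1} + v)$ and applying the CONE bound gives $|x_{t+1} - x_t| \le |M(x_t - x_{t-1})| \le |M|\,|x_t - x_{t-1}|$ entrywise, and iterating this yields $|x_{t+1} - x_t| \le |M|^t\, |x_1 - x_0|$. For indices $s > t$, a telescoping sum then bounds $|x_s - x_t| \le \big(\sum_{k \ge t} |M|^k\big)|x_1 - x_0|$ entrywise, reducing everything to understanding the partial sums of powers of $|M|$.

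Next comes the crucial step: converting $\lpf(|M|) < 1$ into decay of these bounds. Because $|M|$ is non-negative, Perron-Frobenius theory identifies its PF eigenvalue with its spectral radius, so $\lpf(|M|) < 1$ gives $\rho(|M|) < 1$; by Gelfand's formula the powers $|M|^t \to 0$ and the Neumann series $\sum_{k \ge 0} |M|^k = (I - |M|)^{-1}$ converges to a finite non-negative matrix. Hence the tail $\sum_{k \ge t} |M|^k \to 0$, the sequence $(x_t)$ is Cauchy, and it converges to some $x^\star$. Continuity of the non-expansive (hence Lipschitz) map $\phi$ lets me pass to the limit in (\ref{eq:lemma_pf}) to conclude $x^\star = \phi(Mx^\star + v)$, establishing existence of a fixed point obtained by iterating from $x_0 = 0$.

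Finally, for uniqueness I would take two fixed points $x, y$ and repeat the CONE estimate to get $|x - y| \le |M|\,|x - y|$ entrywise; applying the non-negative matrix $|M|$ repeatedly preserves the inequality, giving $0 \le |x - y| \le |M|^k |x - y| \to 0$, so $x = y$. The main obstacle, and the place where the hypothesis genuinely does the work, is the middle step: justifying that $\lpf(|M|) < 1$ controls the asymptotics of the powers $|M|^t$ and guarantees convergence of the Neumann series. This is exactly where Perron-Frobenius theory enters, through the identification of $\lpf$ with the spectral radius of a non-negative matrix; everything else is a routine fixed-point argument carried out in the entrywise order rather than in a single vectorized norm.
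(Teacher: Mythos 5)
Your proposal is correct and follows essentially the same route as the paper's proof: the entrywise CONE bound $|x_{t+1}-x_t|\le |M|\,|x_{t-1}-x_t|$, the geometric/Neumann-series bound via $(I-|M|)^{-1}$ using $\lpf(|M|)<1$, the Cauchy-sequence argument for existence, and the $|x_a-x_b|\le |M|^t|x_a-x_b|\to 0$ argument for uniqueness. Your version is if anything slightly more explicit than the paper's (you spell out Gelfand's formula and the continuity step needed to pass to the limit in the iteration), but there is no substantive difference.
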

\begin{proof}
For existence, since $\phi$ is component-wise and non-expansive, we have that for $t \ge 1$ and the sequence $x_0, x_1, x_2, \dots$ generated from iteration (\ref{eq:lemma_pf}),
$$
|x_{t+1} - x_t| = |\phi(Mx_{t}+v) - \phi(Mx_{t-1}+v)| \le |M (x_{t} - x_{t-1})| \le |M| |x_{t} - x_{t-1}|.
$$
For $n > m \ge 1$, the following inequality follows,
\begin{equation} \label{eq:cauchy_x}
    |x_n - x_m| \le |M|^m \sum^{n-m-1}_{i = 0} |M|^i |x_1 - x_0| \le |M|^m \sum^{\infty}_{i = 0} |M|^i |x_1 - x_0| \le |M|^m w,
\end{equation}
where
$$
w:= \sum^{\infty}_{i = 0} |M|^i |x_1 - x_0| = (I-|M|)^{-1} |x_1 - x_0|.
$$
Because $\lpf(|M|)< 1$, the inverse of $I-|M|$ exists. It also follows that $\lim_{t\rightarrow \infty} |M|^t = 0$. From inequality (\ref{eq:cauchy_x}), we show that the sequence $x_0, x_1, x_2, \dots$ is a Cauchy sequence because $0 \le \lim_{m\rightarrow \infty} |x_n - x_m| \le \lim_{m\rightarrow \infty} |M|^m w = 0$. And thus the sequence converges to some solution of $x = \phi(Mx+v)$. 

For uniqueness, suppose both $x_a$ and $x_b$ satisfy $x = \phi(Mx+v)$, then the following inequality holds,
$$
0 \le |x_a - x_b| \le |M||x_a - x_b| \le \lim_{t\rightarrow \infty} |M|^t |x_a - x_b| = 0.
$$
It follows that $x_a=x_b$ and there exists unique solution to $x = \phi(Mx+v)$.
\end{proof}

\subsection{Proof of Theorem \ref{thm:wp_h}}
\begin{proof}
Similarly, we can rewrite equation (\ref{eq:dummy_equ_h}) into the following ``vectorized'' form.
$$
\vecc(X) = \phi \left( \sum_{i=1}^N (A_i^\top \otimes W_i) \vecc(X) + \sum_{i=1}^N \vecc(B_i) \right)
$$
It follows from a similar scheme as the proof of Lemma \ref{lem:wp_orig} that if $\lpf\left(\sum_{i=1}^N |A_i^\top \otimes W_i|\right) < 1$, the above equation has unique solution which can be obtained by iterating the equation. 
\end{proof}

\subsection{Proof of Theorem \ref{thm:rescale}}
\begin{proof}
The proof is based on the following formula for PF eigenvalue \citep{berman1994nonnegative}.
\begin{equation} \label{eq:rescale}
    \lpf(|W|) = \inf_{S} \: \|SWS^{-1}\|_\infty ~:~ S = \diag(s), \;\; s>0
\end{equation}

In the case where $|W|$ has simple PF eigenvalue, problem (\ref{eq:rescale}) admits positive optimal scaling factor $s>0$, a PF eigenvector of $|W|$. And we can design the equivalent IGNN $(\tilde{f_\Theta}, W', A, \tilde{b_\Omega}, \phi)$ with $\|W'\|_\infty < \lpf(A)^{-1}$ by rescaling:
$$
\Tilde{f_\Theta}(\cdot) = f_\Theta(S^{-1}\;\cdot), \;\;\;\; W' = S W S^{-1}, \;\;\;\; \Tilde{b_\Omega}(\cdot) = S b_\Omega(\cdot),
$$
where $S = \diag(s)$.
\end{proof}

\section{Examples of IGNN} \label{app:example_ignn}
In this section we introduce some examples of the variation of IGNN. 
\paragraph{Multi-layer Setup.}
It is straight forward to extend IGNN to a multi-layer setup with several sets of $W$ and $\Omega$ parameters for each layer. For conciseness, we use the ordinary graph setting. By treating the fixed-point solution $X_{l-1}$ of the $(l-1)$-th layer as the input $U_{l}$ to the $l$-th layer of equilibrium equation, a multi-layer formulation of IGNN with a total of $L$ layers is created.
\begin{equation}\label{eq:ignn_ml}
\begin{aligned}
    \hat{Y} &= f_\Theta (X_L), \\
    X_L &= \phi_L(W_L X_L A + b_{\Omega_L}(X_{L-1})), \\
    &\vdots\\
    X_l &= \phi_l(W_l X_l A + b_{\Omega_l}(X_{l-1})), \\
    &\vdots\\
    X_1 &= \phi_1(W_1 X_1 A + b_{\Omega_1}(U)),
\end{aligned}
\end{equation}
where $\phi_1, \dots, \phi_L$ are activation functions. We usually assume that CONE property holds on them. And $(W_l, \Omega_l)$ is the set of weights for the $l$-th layer. Thus the multi-layer formulation (\ref{eq:ignn_ml}) with parameters $(W_l, l = 1, \dots, L,\; A)$ is well-posed (\ie \; gives unique prediction $\hat{Y}$ for any input $U$) when $(W_l, A)$ is well-posed for $\phi_l$ for any layer $l$. This is true since the well-posedness for a layer guarantees valid input for the next layer. Since all layers are well-posed, the formulation will give unique final output for any input of compatible shape. FDGNN \citep{gallicchio2019fast} uses a similar multi-layer formulation for graph classification but is only partially trained in practive.

In terms of the affine input function, $b_{\Omega}(U) = \Omega U A$ is a good choice. 
We show that the multi-layer IGNN with such $b_\Omega$ is equivalent to a single layer IGNN (\ref{eq:ignn}) with higher dimensions, the same $A$ matrix and $f_\Theta$ function. The new activation map is given by $\phi = (\phi_L, \dots, \phi_l, \dots, \phi_1)$. Although $\phi$ is written in a block-wise form, they still operate on entry level and remain non-expansive. Thus the well-posedness results still hold. The new $\tilde{W}$ and $\tilde{b_\Omega}$ write,
\begin{equation}
    \tilde{W} = \begin{pmatrix}
W_L & \Omega_L &  &  \\
  & \ddots & \ddots & \\
  &   & \ddots & \Omega_2 \\
 &   &  &  W_1
\end{pmatrix}, \;\;\;\;
\tilde{b_\Omega}(U) = \begin{pmatrix}
  0\\
  \vdots \\
  0 \\
 \Omega_1
\end{pmatrix} U A.
\end{equation}

\paragraph{Special Cases.}
Many existing GNN formulations including convolutional and recurrent GNNs can be treated as special cases of IGNN. We start by showing that GCN \citep{kipf2016semi}, a typical example of convolutional GNNs, is indeed an IGNN. We give the matrix representation of a 2-layer GCN as follows,
\begin{equation} \label{eq:gcn}
    \begin{aligned}
    \hat{Y} &= W_2 X_1 A, \\
    X_1 &= \phi_1 (W_1 U A),
    \end{aligned}
\end{equation}
where $A$ is the renormalized adjacency matrix; $W_1$ and $W_2$ are weight parameters; $\phi_1$ is a CONE activation map for the first layer; and $X_1$ is the hidden representation of first layer. We show that GCN (\ref{eq:gcn}) is in fact a special case of IGNN by constructing an equivalent single layer IGNN (\ref{eq:ignn}) with the same $A$ matrix. 
\begin{subequations}
\begin{align}
  \hat{Y} &= \tilde{f_\Theta}(\tilde{X}) ,\\
  \tilde{X} &= \phi(\tilde{W}\tilde{X}A+\tilde{b_\Omega}(U)).
\end{align}
\end{subequations}
The new state $\tilde{X} = (X_2, X_1)$. The new activation map is given by $\phi = (\phi_1,\mathbb{I})$, where $\mathbb{I}$ represents an identity map. And the new $\tilde{W}$, $\tilde{b_\Omega}$, and $\tilde{f_\Theta}(\tilde{X})$ are,
\begin{equation}
    \tilde{W} = \begin{pmatrix}
0 & W_2\\
0 & 0
\end{pmatrix}, \;\;\;\;
\tilde{b_\Omega}(U) = \begin{pmatrix}
  0\\
 W_1
\end{pmatrix} U A, \;\;\;\;
\tilde{f_\Theta}(\tilde{X}) = \begin{pmatrix}
  I\\
 0
\end{pmatrix} \tilde{X}.
\end{equation}
This reformulation of single layer IGNN also extends to multi-layer GCNs with more than 2 layers as well as other convolutional GNNs. Note that the new $\tilde{W}$ for the equivalent single layer IGNN is always strictly upper triangular. Thus $|\tilde{W}|$ has only $0$ eigenvalue. As a result, $\lpf(|A^\top \otimes W|) = \lpf(A)\lpf(|W|) = 0$ and the sufficient condition for well-posedness is always satisfied.

Another interesting special case is SSE~\citep{dai2018learning}, an example of recurrent GNN, that is given by
\begin{equation} \label{eq:sse}
\begin{aligned}
    \hat{Y} &= W_2 X, \\
    X &= \phi (W_{1r} W_2 X A + W_{1u} U A + W_{1u}' U),
\end{aligned}
\end{equation}
which can be easily converted into a single layer IGNN with the same $A$ matrix and CONE activation $\phi$. The new $\tilde{W}$, $\tilde{b_\Omega}$, and $\tilde{f_\Theta}(X)$ are,
\begin{equation}
    \tilde{W} = W_{1r} W_2, \;\;\;\; \tilde{b_\Omega}(U) = W_{1u} U A + W_{1u}' U, \;\;\;\; \tilde{f_\Theta}(X) = W_2 X.
\end{equation}


\section{Implicit differentiation for IGNN} \label{app:gradient}
To compute gradient of $\mathcal{L}$ from the training problem (\ref{eq:training}) w.r.t. a scalar $q \in W \cup \Omega$, we can use chain rule. It follows that,

\begin{equation} \label{eq:nabla_q}
    \nabla_q \mathcal{L} = \left\langle \frac{\partial X}{\partial q}, \nabla_X \mathcal{L}\right \rangle,
\end{equation}
where $\nabla_X \mathcal{L}$ can be easily calculated through modern autograd frameworks. But $\frac{\partial X}{\partial q}$ is non-trivial to obtain because $X$ is only implicitly defined. Fortunately, we can still leverage chain rule in this case by carefully taking the ``implicitness'' into account.

To avoid taking derivatives of matrices by matrices, we again introduce the vectorized representation $\vecc(\cdot)$ of matrices. The vectorization of a matrix $X\in\reals^{m\times n}$, denoted $\vecc(X)$, is obtained by stacking the columns of $X$ into one single column vector of dimension $mn$. For simplicity, we use $\vec{X} := \vecc(X)$ and $\nabla_{\vec{X}}\mathcal{L} = \vecc(\nabla_X \mathcal{L})$ as a short hand notation of vectorization. 

\begin{equation} \label{eq:x_q_chain}
    \frac{\partial \vec{X}}{\partial q} = \frac{\partial \vec{X}}{\partial \vec{Z}} \cdot \frac{\partial \vec{Z}}{\partial q},
\end{equation}

where $Z = WXA + b_\Omega(U)$ ($\vec{Z} = (A^\top \otimes W) \vec{X} + \overrightarrow{b_\Omega(U)}$) \emph{assuming fixed $X$}. Unlike $X$ in equation (\ref{eq:ignn_equ}), $Z$ is not implicitly defined and should only be considered as a closed evaluation of $Z = WXA + b_\Omega(U)$ assuming $X$ doesn't change depending on $Z$. In some sense, the $Z$ in equation (\ref{eq:x_q_chain}) doesn't equal to $WXA + b_\Omega(U)$. However, the closeness property will greatly simplify the evaluation of $\frac{\partial \vec{Z}}{\partial q}$. It turns out that we can still employ chain rule in this case to calculate $\frac{\partial \vec{X}}{\partial \vec{Z}}$ for such $Z$ by taking the change of $X$ before hand into account as follows,
\begin{equation} \label{eq:x_z_chain}
    \frac{\partial \vec{X}}{\partial \vec{Z}} = \frac{\partial \phi (\vec{Z})}{\partial \vec{Z}} + \frac{\partial \phi \left((A^\top\otimes W)\vec{X} + \overrightarrow{b_\Omega(U)}\right)}{\partial \vec{X}} \cdot \frac{\partial \vec{X}}{\partial \vec{Z}},
\end{equation}
where the second term accounts for the change in $X$ that was ignored in $Z$. Another way to view this calculation is to right multiply $\frac{\partial \vec{Z}}{\partial q}$ on both sides of equation (\ref{eq:x_z_chain}), which gives the chain rule evaluation of $\frac{\partial \vec{X}}{\partial q}$ that takes the gradient flowing back to $X$ into account:
$$
\frac{\partial \vec{X}}{\partial q} = \frac{\partial \phi \left((A^\top\otimes W)\vec{X} + \overrightarrow{b_\Omega(U)}\right)}{\partial q} + \frac{\partial \phi \left((A^\top\otimes W)\vec{X} + \overrightarrow{b_\Omega(U)}\right)}{\partial \vec{X}} \cdot \frac{\partial \vec{X}}{\partial q}.
$$

The equation (\ref{eq:x_z_chain}) can be simplified as follows,
\begin{align} \label{x_z_simple}
    \frac{\partial \vec{X}}{\partial \vec{Z}} &= (I-J)^{-1} \Tilde{D}, \\
    J &= \frac{\partial \phi\left( (A^\top \otimes W) \vec{X} + \overrightarrow{b_{\Omega} (U)}\right)}{\partial \vec{X}} = \Tilde{D}(A^\top\otimes W),  \notag
\end{align}
where $\Tilde{D} = \frac{\partial \phi(\vec{Z})}{\partial \vec{Z}} = \diag \left(\phi'\left((A^\top \otimes W) \vec{X} + \overrightarrow{b_\Omega (U)}\right)\right).$ Now we can rewrite equation (\ref{eq:nabla_q}) as

\begin{align}
    \nabla_q \mathcal{L} &= \left\langle \frac{\partial \vec{Z}}{\partial q}, \nabla_{\vec{Z}} \mathcal{L} \right\rangle,  \label{eq:nabla_qq} \\
    \nabla_{\vec{Z}} \mathcal{L} &= \left(\frac{\partial \vec{X}}{\partial \vec{Z}}\right)^\top \nabla_{\vec{X}} \mathcal{L},  \label{eq:nabla_z}
\end{align}
which is equivalent to equation (\ref{eq:q_grad}). $\nabla_{\vec{Z}}\mathcal{L}$ should be interpreted as the direction of steepest change of $\mathcal{L}$ for $Z = WXA + b_\Omega(U)$ \emph{assuming fixed $X$}. Plugging equation (\ref{eq:x_z_chain}) to (\ref{eq:nabla_z}), we arrive at the following equilibrium equation (equivalent to equation (\ref{eq:gradient_z}))
\begin{align} \label{eq:gradient_z_app}
    \nabla_{\vec{Z}} \mathcal{L} &= \Tilde{D} (A\otimes W^\top) \nabla_{\vec{Z}} \mathcal{L} + \Tilde{D}\; \nabla_{\vec{X}} \mathcal{L}, \notag \\
    \nabla_Z \mathcal{L} &= D \odot \left(W^\top \nabla_Z\mathcal{L}\; A^\top + \nabla_X \mathcal{L}\right),
\end{align}
where $D = \phi'(WXA+ b_\Omega(U))$. Interestingly, $\nabla_Z \mathcal{L}$ turns out to be given as a solution of an equilibrium equation particularly similar to equation (\ref{eq:ignn_equ}) in the IGNN ``forward'' pass. In fact, we can see element-wise multiplication with $D$ as a CONE ``activation map'' $\Tilde{\phi}(\cdot) = D\odot(\cdot)$. And it follows from Section \ref{sub:wellposedness} that if $\lpf(W)\lpf(A)<1$, then $\lpf(W^\top)\lpf(A^\top)<1$ and $\nabla_Z \mathcal{L}$ can be uniquely determined by iterating the above equation (\ref{eq:gradient_z_app}). Although the proof will be more involved, if $(W,A)$ is well-posed for any CONE activation map, we can conclude that equilibrium equation (\ref{eq:gradient_z_app}) is also well-posed for $\Tilde{\phi}$ where $\phi$ can be any CONE activation map.

Finally, by plugging the evaluated $\nabla_Z\mathcal{L}$ into equation (\ref{eq:nabla_qq}), we get the desired gradients. Note that it is also possible to obtain gradient $\nabla_U \mathcal{L}$ by setting the $q$ in the above calculation to be $q\in U$. This is valid because we have no restrictions on selection of $q$ other than that it is not $X$, which is assumed fixed. Following the chain rule, we can give the closed form formula for $\nabla_W \mathcal{L}$, $\nabla_{\omega} \mathcal{L}, \omega\in\Omega$, and $\nabla_{u} \mathcal{L}, u\in U$.

$$
\nabla_W \mathcal{L} = \nabla_Z \mathcal{L}\;A^\top X^\top, \;\;\;\; \nabla_{\omega} \mathcal{L} = \left\langle \frac{\partial b_\Omega (U)}{\partial \omega}, \nabla_Z\mathcal{L} \right\rangle ,\;\;\;\; \nabla_{u} \mathcal{L} = \left\langle \frac{\partial b_\Omega (U)}{\partial u}, \nabla_Z\mathcal{L} \right\rangle.
$$
\paragraph{Heterogeneous Network Setting}
We start by giving the training problem for heterogeneous networks similar to training problem (\ref{eq:training}) for ordinary graphs,
\begin{align} \label{eq:training_h}
    \min_{\Theta, W,\Omega} \;\; &\mathcal{L}(Y, f_\Theta(X)) \notag \\
    \text{s.t.} \;\; & X = \phi \left( \sum_{i=1}^N (W_iXA_i+b_{\Omega_i}(U_i))\right), \\
    & \;\;\;\; \sum_{i=1}^N \|A_i\|_1 \|W_i\|_\infty \le \kappa \notag.
\end{align}

The training problem can be solved again using projected gradient descent method where the gradient of $W_i$ and $\Omega_i$ for $i\in R$ can be obtained with implicit differentiation. Using chain rule, we write the gradient of a scalar $q\in \bigcup_i (W_i \cup \Omega_i)$,

\begin{equation} \label{eq:q_grad_h}
    \nabla_{q} \mathcal{L} = \left\langle \frac{\partial \left(\sum_{i=1}^N (W_iXA_i+b_{\Omega_i}(U_i))\right)}{\partial q}, \nabla_Z \mathcal{L}\right\rangle,
\end{equation}

where $Z = \sum_{i=1}^N (W_iXA_i+b_{\Omega_i}(U_i))$ and $\nabla_Z \mathcal{L}$ in equation (\ref{eq:q_grad_h}) should be interpreted as ``direction of fastest change of $\mathcal{L}$ for $Z$ \emph{assuming fixed $X$}''. Similar to the derivation in ordinary graphs setting, such notion of $\nabla_Z \mathcal{L}$ enables convenient calculation of $\nabla_{q} \mathcal{L}$. And the vectorized gradient w.r.t. $Z$ can be expressed as a function of the vectorized gradient w.r.t. $X$:
\begin{align}
    \nabla_{\vec{Z}} \mathcal{L} &= \left(\frac{\partial \vec{X}}{\partial\vec{Z}}\right)^\top \nabla_{\vec{X}} \mathcal{L} \label{eq:nabla_z_h} \\
    \frac{\partial \vec{X}}{\partial\vec{Z}} &= \frac{\partial \phi (\vec{Z})}{\partial \vec{Z}} + \frac{\partial \phi \left(\sum_{i=1}^N \left( (A_i^\top\otimes W_i)\vec{X} + \overrightarrow{b_{\Omega_i}(U_i)}\right)\right)}{\partial \vec{X}} \cdot \frac{\partial \vec{X}}{\partial \vec{Z}} \notag \\
    &= (I-J)^{-1} \Tilde{D} \label{eq:x_z_chain_h} \\
    J &= \frac{\partial \phi\left(\sum_{i=1}^N \left( (A_i^\top\otimes W_i)\vec{X} + \overrightarrow{b_{\Omega_i}(U_i)}\right)\right)}{\partial \vec{X}} = \Tilde{D}\sum_{i=1}^N (A_i^\top\otimes W_i),  \notag
\end{align}
where $\Tilde{D} = \frac{\partial \phi(\vec{Z})}{\partial \vec{Z}} = \diag \left(\phi'\left(\sum_{i=1}^N \left( (A_i^\top\otimes W_i)\vec{X} + \overrightarrow{b_{\Omega_i}(U_i)}\right)\right)\right).$ Plugging the expression (\ref{eq:x_z_chain_h}) into (\ref{eq:nabla_z_h}), we arrive at the following equilibrium equation for $\nabla_{\vec{Z}}\mathcal{L}$ and $\nabla_{Z}\mathcal{L}$,
\begin{align} \label{eq:gradient_z_app_h}
    \nabla_{\vec{Z}} \mathcal{L} &= \Tilde{D} \sum_{i=1}^N (A_i\otimes W_i^\top) \nabla_{\vec{Z}} \mathcal{L} + \Tilde{D}\; \nabla_{\vec{X}} \mathcal{L} \notag \\
    \nabla_Z \mathcal{L} &= D \odot \left(\sum_{i=1}^N (W_i^\top \nabla_Z\mathcal{L} A_i^\top) + \nabla_X \mathcal{L}\right),
\end{align}
where $D = \phi'\left(\sum_{i=1}^N (W_iXA_i+ b_{\Omega_i}(U_i))\right)$. Not surprisingly, the equilibrium equation (\ref{eq:gradient_z_app_h}) again appears to be similar to the equation (\ref{eq:ignn_h_equ}) in the IGNN ``forward'' pass. We can also view element-wise multiplication with $D$ as a CONE ``activation map'' $\Tilde{\phi}(\cdot) = D\odot(\cdot)$. And it follows from Section \ref{sub:wellposedness} that if  $\lpf(|A^\top \otimes W|) < 1$, then  $\lpf(|A \otimes W^\top|) < 1$ and $\nabla_Z \mathcal{L}$ can be uniquely determined by iterating the above equation (\ref{eq:gradient_z_app}). It also holds that if $(W_i, A_i, i\in \{1, \dots, N\})$ is well-posed for any CONE activation $\phi$, then 
we can conclude that equilibrium equation (\ref{eq:gradient_z_app_h}) is also well-posed for $\Tilde{\phi}$ where $\phi$ can be any CONE activation map.

Finally, by plugging the evaluated $\nabla_Z\mathcal{L}$ into equation (\ref{eq:q_grad_h}), we get the desired gradients. It is also possible to obtain gradient $\nabla_{U_i} \mathcal{L}$ by setting the $q$ in the above calculation to be $q\in \bigcup_i U_i$. This is valid because we have no restrictions on selection of $q$ other than that it is not $X$, which is assumed fixed.

After the gradient step, the projection to the tractable condition mentioned in Section \ref{sub:tractable} can be done approximately by assigning $\kappa_i$ for each relation $i\in R$ and projecting $W_i$ onto $\mathcal{C}_i = \{\|W_i\|_\infty \le \kappa_i/\|A\|_1\}$. Ensuring $\sum_i \kappa_i = \kappa < 1$ will guarantee that the PF condition for heterogeneous network is satisfied. However, empirically, setting $\kappa_i < 1$ with $\sum_i \kappa_i > 1$ in some cases is enough for the convergence property to hold for the equilibrium equations.

\section{More on Experiments} \label{app:more experiments}

In this section, we give detailed information about the experiments we conduct. 

For preprocessing, we apply the \emph{renormalization trick} consistent with GCN~\citep{kipf2016semi} on the adjacent matrix of all data sets.

In terms of hyperparameters, unless otherwise specified, for IGNN, we use affine transformation $b_\Omega(U) = \Omega U A$; linear output function $f_\Theta(X) = \Theta X$; ReLU activation $\phi(\cdot) = \max(\cdot, 0)$; learning rate 0.01; dropout with parameter $0.5$ before the output function; and $\kappa = 0.95$. We tune layers, hidden nodes, and $\kappa$ through grid search. The hyperparameters for other baselines are consistent with that reported in their papers. Results with identical experimental settings are reused from previous works.


\subsection{Synthetic Chains Data Set}

We construct a synthetic node classification task to test the capability of models of learning to gather information from distant nodes. We consider the chains directed from one end to the other end with length $l$ (\ie \; $l+1$ nodes in the chain). For simplicity, we consider binary classification task with 2 types of chains. Information about the type is only encoded as 1/0 in first dimension of the feature (100d) on the starting end of the chain. The labels are provided as one-hot vectors (2d). In the data set 
we choose chain length $l=9$ and 20 chains for each class with a total of 400 nodes. The training set consists of 20 data points randomly picked from these nodes in the total 40 chains. Respectively, the validation set and test set have 100 and 200 nodes. 

A single-layer IGNN is implemented with 16 hidden unites and weight decay of parameter $5\times 10^{-4}$ for all chains data sets with different $l$. Four representative baselines are chosen: Stochastic Steady-state Embedding (\textbf{SSE})~\citep{dai2018learning}, Graph 
Convolutional Network (\textbf{GCN})~\citep{kipf2016semi}, Simple Graph Convolution (\textbf{SGC})~\citep{pmlrv97wu19e} and Graph Attention Network (\textbf{GAT})~\citep{velivckovic2017graph}. They all use the same hidden units and weight decay as IGNN. For (\textbf{GAT}), 8 head attention is used. For (\textbf{SSE}), we use the embedding directly as output and fix-point iteration $n_h = 8$, as suggested \citep{dai2018learning}.

As mentioned in Section \ref{sec:numerical}, convolutional GNNs with $T=2$ cannot capture the dependency with a range larger than $2$-hops. To see how convolutional GNNs capture the long-range dependency as $T$ grows, we give an illustration of Micro-$F_1$ verses $T$ for the selected baselines in Figure \ref{fig:chains_T}. From the experiment, we find that convolutional GNNs cannot capture the long-range dependency given larger $T$. This might be a result of the limited number of training nodes in this chain task. As $T$ grows, convolutional GNNs experience an explosion of number of parameters to train. Thus the training data becomes insufficient for these models as the number of parameters increases.

\begin{figure}
    \centering
    \begin{minipage}{0.45\textwidth}
		\centering
		\caption{Chains with $l=9$. Traditional methods fail even with more iterations. }
		\includegraphics[width=0.8\textwidth]{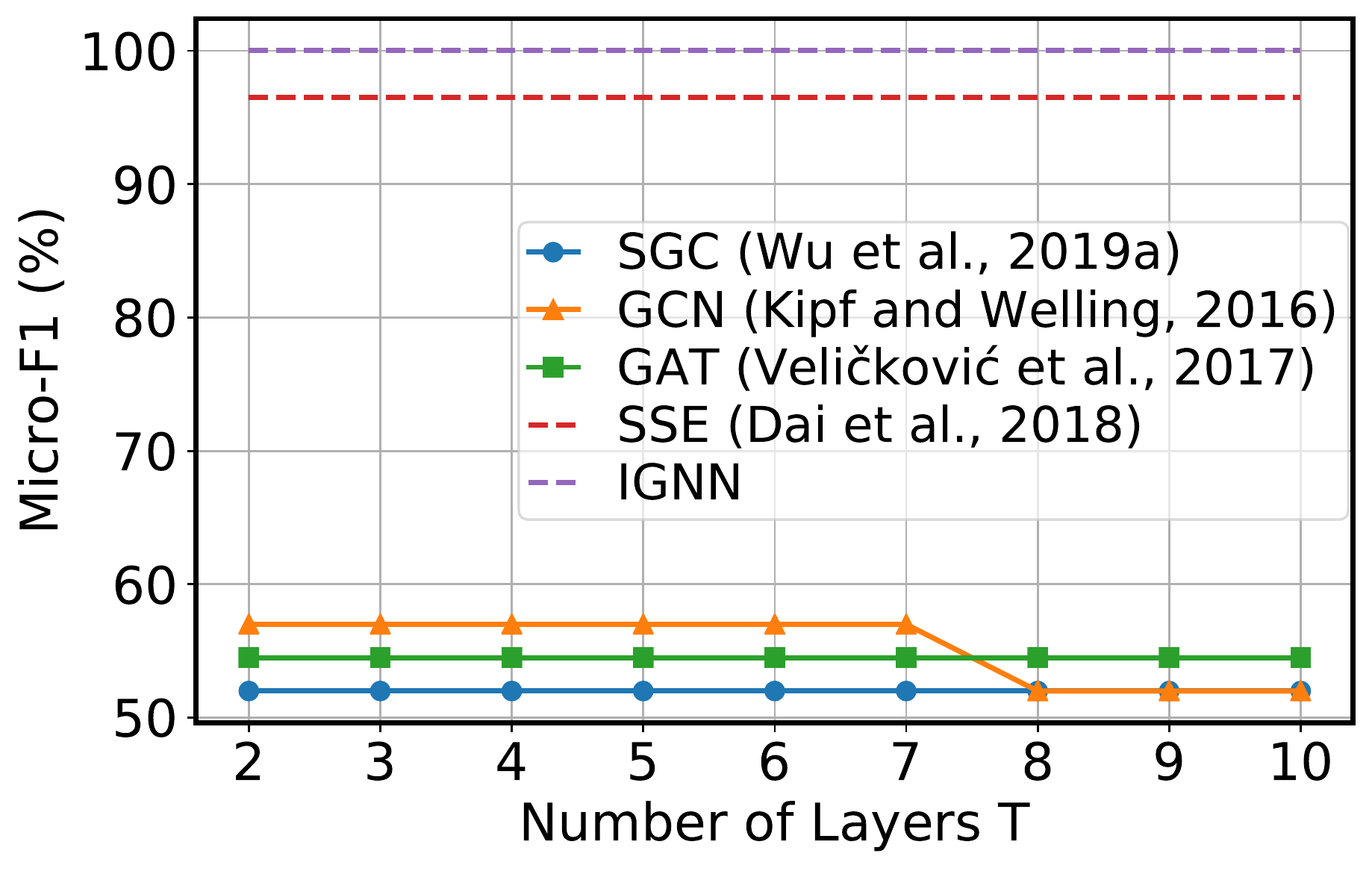}
		\label{fig:chains_T}
	\end{minipage}
\end{figure}

\subsection{Node Classification}
For node classification task, we consider the applications under both transductive (Amazon) \citep{yang2015defining} and inductive (PPI) \citep{hamilton2017inductive} settings.
Transducive setting is where the model has access to the feature vectors of all nodes during training, while inductive setting is where the graphs for testing remain completely unobserved during training.
The statistics of the data sets can be found in Table~\ref{tab:node classification datasets stats}.

\begin{table}[!t]
\centering
\caption{The overview of data set statistics in node classification tasks.}
\vspace{2mm}
\resizebox{0.98\textwidth}{!}{%
\begin{tabular}{l c c c c c}
\toprule
\textbf{Data set} & \textbf{\# Nodes} & \textbf{\# Edges} & \textbf{\# Labels} & \textbf{Label type} & \textbf{Graph type}  \\[0.05em]\hline \\[-0.8em]
\textbf{Amazon (transductive)} & 334,863 & 925,872 & $58$ & Product type & Co-purchasing \\
\textbf{PPI (inductive)} & 56,944 & 818,716 & $121$ & Bio-states & Protein \\

\bottomrule
\end{tabular}
\label{tab:node classification datasets stats}
}
\end{table}

For experiments on Amazon, we construct a one-layer IGNN with 128 hidden units. 
No weight decay is utilized. 
The hyper parameters of baselines are consistent with~\citep{yang2015defining,dai2018learning}.

For experiments on PPI,
a five-layer IGNN model is applied for this multi-label classification tasks with hidden units as [1024, 512, 512, 256, 121] and $\kappa = 0.98$ for each layer. In addition, four MLPs are applied between the first four consecutive IGNN layers. We use the identity output function. Neither weight decay nor dropout is employed. We keep the experimental settings of baselines consistent with~\citep{velivckovic2017graph,dai2018learning,kipf2016semi,hamilton2017inductive}.
\subsection{Graph Classification}
For graph classification, 5 bioinformatics data sets are employed with information given in Table~\ref{tab:graph classification}. 
We compare IGNN with a comprehensive set of baselines, including a variety of GNNs: Deep Graph Convolutional Neural Network (\textbf{DGCNN})~\citep{zhang2018end}, Diffusion-Convolutional Neural Networks (\textbf{DCNN})~\citep{atwood2016diffusion}, Fast and Deep
Graph Neural Network (\textbf{FDGNN})~\citep{gallicchio2019fast}, \textbf{GCN}~\citep{kipf2016semi} and Graph Isomorphism Network (\textbf{GIN})~\citep{xu2018powerful}, and a number of state-of-the-art graph kernels: Graphlet Kernel (\textbf{GK})~\citep{shervashidze2009efficient}, Random-walk Kernel (\textbf{RW})~\citep{gartner2003graph}, Propagation Kernel (\textbf{PK})~\citep{neumann2016propagation} and Weisfeiler-Lehman Kernel (\textbf{WL})~\citep{shervashidze2011weisfeiler}. We reuse the results from literatures~\citep{xu2018powerful,gallicchio2019fast} since the same experimental settings are maintained. 

As of IGNN, a three-layer IGNN is constructed for comparison with the hidden units of each layer as 32 and $\kappa = 0.98$ for all layers. We use an MLP as the output function. Besides, batch normalization is applied on each hidden layer. Neither weight decay nor dropout is utilized.

\subsection{Heterogeneous Networks} 

For heterogeneous networks, three data sets are chosen (ACM, IMDB, and DBLP). Consistent with previous works~\citep{park2019unsupervised}, we use the the publicly available ACM data set~\citep{wang2019heterogeneous}, preprocessed DBLP and IMDB data sets~\citep{park2019unsupervised}. For ACM and DBLP data sets, the nodes are papers and the aim is to classify the papers into three classes (Database, Wireless Communication, Data Mining), and four classes (DM, AI, CV, NLP)\footnote{\textbf{DM}: KDD,WSDM,ICDM, \textbf{AI}: ICML,AAAI,IJCAI, \textbf{CV}: CVPR, \textbf{NLP}: ACL,NAACL,EMNLP}, respectively. For IMDB data set, the nodes are movies and we aim to classify these movies into three classes (Action, Comedy, Drama). The detailed information of data sets can be referred to Table~\ref{tab:heterogeneous datasets stats}. The preprocessing procedure and splitting method on three data sets keep consistent with~\citep{park2019unsupervised}.

State-of-the-art baselines are selected for comparison with IGNN, including no-attribute network embedding: \textbf{DeepWalk}~\citep{perozzi2014deepwalk}, attributed network embedding: \textbf{GCN}, \textbf{GAT} and \textbf{DGI}~\citep{velivckovic2018deep}, and attributed multiplex network embedding: \textbf{mGCN}~\citep{ma2019multi}, \textbf{HAN}~\citep{wang2019heterogeneous} and \textbf{DMGI}~\citep{park2019unsupervised}. Given the same experimental settings, we reuse the results of baselines from~\citep{park2019unsupervised}.

A one-layer IGNN with hidden units as 64 is implemented on all data sets. Similar to \textbf{DMGI}, a weight decay of parameter $0.001$ is used. For ACM, $\kappa = (0.55, 0.55)$ is used for Paper-Author and Paper-Subject relations. For IMDB, we select $\kappa = (0.5, 0.5)$ for Movie-Actor and Movie-Director relations. For DBLP, $\kappa = (0.7, 0.4)$ is employed for Paper-Author and Paper-Paper relations. As mentioned in Appendix \ref{app:gradient}, in practice, the convergence property can still hold when $\sum_i \kappa_i >1$.


\begin{table*}[t]
	\centering
	\small
	
	\caption{Statistics of the data sets for heterogeneous graphs \citep{park2019unsupervised}. The node attributes are bag-of-words of text. Num. labeled data denotes the number of nodes involved during training.}
	\setlength\tabcolsep{1pt}
	\resizebox{\textwidth}{!}{%
	\begin{tabular}{c|c|c|c|c|c|c|c|c|c}
		& \begin{tabular}[x]{@{}c@{}}Relations \\(A-B)\end{tabular}   & Num. A   & Num. B & Num. A-B & Relation type & \begin{tabular}[x]{@{}c@{}}Num. \\relations\end{tabular} & \begin{tabular}[x]{@{}c@{}}Num. \\node attributes\end{tabular}  &  \begin{tabular}[x]{@{}c@{}}Num. \\labeled data\end{tabular}       & \begin{tabular}[x]{@{}c@{}}Num.  \\classes\end{tabular}   \\
		\hline
		\multirow{2}{*}{ACM}  & \underline{P}aper-\underline{A}uthor   & 3,025  & 5,835    & 9,744  & \underline{P}-\underline{A}-\underline{P} & 29,281  & \multirow{2}{*}{\begin{tabular}[x]{@{}c@{}}1,830\vspace{-0.5ex} \\(Paper abstract)\end{tabular}}  & \multirow{2}{*}{600}                      & \multirow{2}{*}3    \\
		& \underline{P}aper-\underline{S}ubject  & 3,025  & 56       & 3,025  & \underline{P}-\underline{S}-\underline{P}  & 2,210,761 &                                   &                                 &  \\
		\hline
		\multirow{2}{*}{IMDB} & \underline{M}ovie-\underline{A}ctor    & 3,550  & 4,441    & 10,650  & \underline{M}-\underline{A}-\underline{M}  & 66,428  & \multirow{2}{*}{\begin{tabular}[x]{@{}c@{}}1,007\vspace{-0.5ex} \\(Movie plot)\end{tabular}}           & \multirow{2}{*}{300}                & \multirow{2}{*}3 \\
		& \underline{M}ovie-\underline{D}irector & 3,550  & 1,726    & 3,550  & \underline{M}-\underline{D}-\underline{M}   & 13,788  &                                   &                                   & \\
		\hline
		\multirow{3}{*}{DBLP} & \underline{P}aper-\underline{A}uthor   & 7,907  & 1,960    & 14,238 & \underline{P}-\underline{A}-\underline{P}  & 144,783 & \multirow{3}{*}{\begin{tabular}[x]{@{}c@{}}2,000 \\(Paper abstract)\end{tabular}}            & \multirow{3}{*}{80}              & \multirow{3}{*}4   \\
		& \underline{P}aper-\underline{P}aper    & 7,907  & 7,907    & 10,522   & \underline{P}-\underline{P}-\underline{P}   & 90,145  &                                   &                               &  \\
		& \underline{A}uthor-\underline{T}erm    & 1,960  & 1,975    & 57,269  & \underline{P}-\underline{A}-\underline{T}-\underline{A}-\underline{P}   & 57,137,515    &                                &                             & \\
	\end{tabular}
	}
	\label{tab:heterogeneous datasets stats}
\end{table*}

\subsection{Over-smoothness} \label{app:oversmooth}
Convolutional GNNs has suffered from over-smoothness when the model gets deep. An interesting question to ask is whether IGNN suffers from the same issue and experience performance degradation in capturing long-range dependency with its "infinitively deep" GNN design.

In an effort to answer this question, we compared IGNN against two latest convolutional GNN models that solve the over-smoothness issue, GCNII \cite{chen2020simple} and DropEdge \cite{rong2020dropedge}. We use the same experimental setting as the Chains experiment in section \ref{sec:numerical}. Both GCNII and DropEdge are implemented with 10-layer and is compared with IGNN in capturing long-range dependency. The result is reported in Figure \ref{fig:chains_oversmooth}. We observe that IGNN consistently outperforms both GCNII and DropEdge as the chains gets longer. The empirical result suggest little suffering from over-smoothness for recurrent GNNs.

\begin{figure}[htbp]
	\centering
	\begin{minipage}{0.45\textwidth}
		\centering
		\caption{Micro-$F_{1}$ (\%) performance with respect to the length of the chains.}
		\label{fig:chains_oversmooth}
		\includegraphics[width=0.95\textwidth]{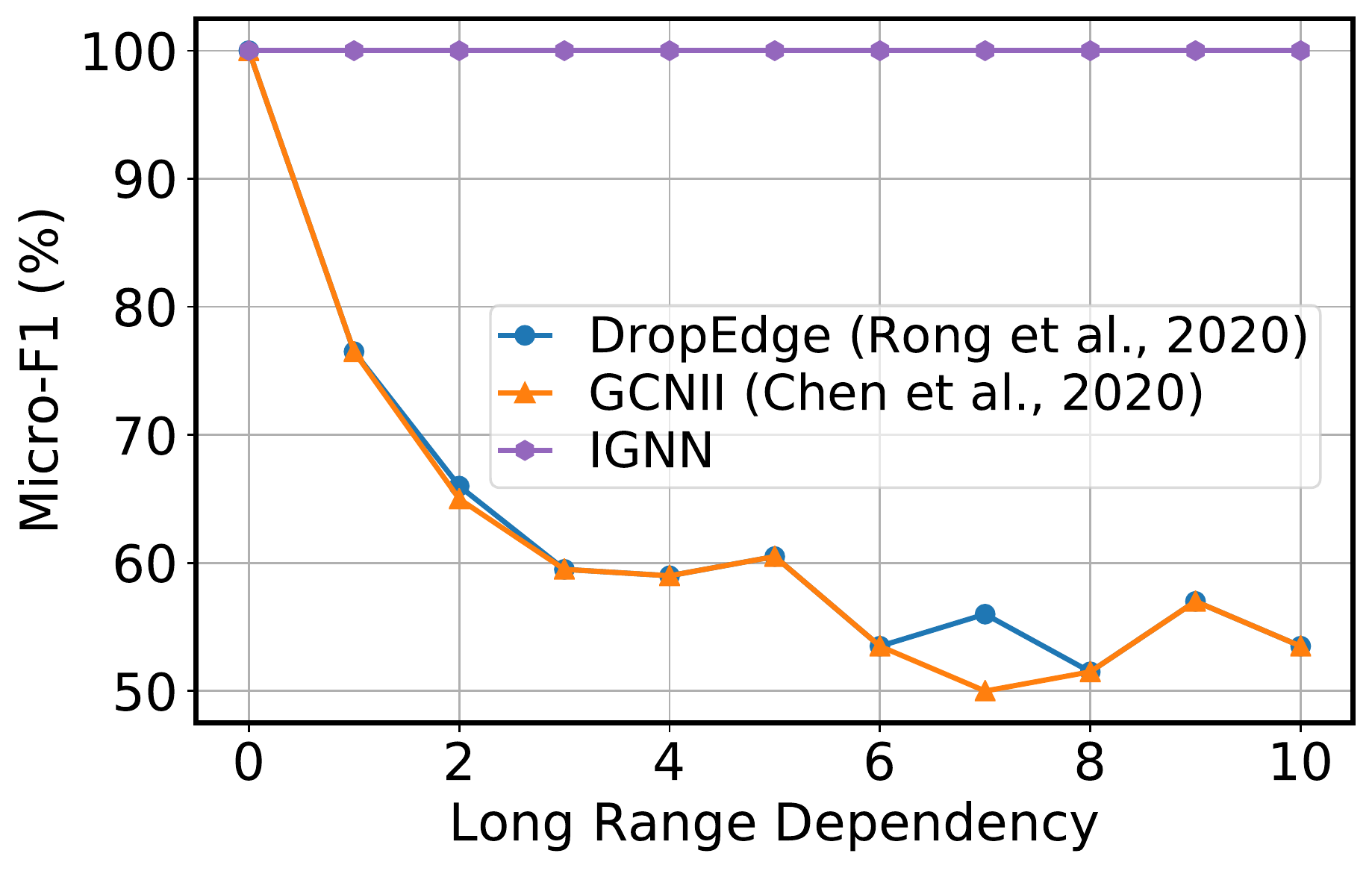}
	\end{minipage}
\end{figure}

\end{document}